\let\cref\Cref
\DeclareMathOperator{\Int}{Int}
\DeclareMathOperator{\Can}{Can}
\DeclareMathOperator{\Imp}{Imp}
\DeclareMathOperator{\Th}{Th}
\DeclareMathOperator{\dist}{dist}
\DeclareMathOperator{\Mod}{Mod}
\def\symdiff{\mathbin{\triangle}}
\DeclareMathOperator{\subsets}{\mathfrak{P}}
\DeclareMathOperator{\precision}{prec}
\DeclareMathOperator{\recall}{recall}
\renewcommand{\Pr}{\operatorname{Pr}}
\newcommand{\set}[1]{\{\,#1\,\}}
\renewcommand{\epsilon}{\varepsilon}
\newcommand\blfootnote[1]{%
  \begingroup
  \renewcommand\thefootnote{}\footnote{#1}%
  \addtocounter{footnote}{-1}%
  \endgroup
}
\begin{document}

\title{On the Usability of\\ Probably Approximately Correct Implication Bases}
\author{Daniel Borchmann\inst{1} \and Tom Hanika\inst{2,3} \and
  Sergei Obiedkov\inst{4}} \institute{%
  Chair of Automata Theory\\
  Technische Universität Dresden, Germany\\
  \and
  Knowledge \& Data Engineering Group\\
  University of Kassel, Germany\\
  \and
  Interdisciplinary Research Center for Information System Design\\
  University of Kassel, Germany\\
  \and
  National Research University
  Higher School of Economics, Moscow, Russia\\[1ex]
  \email{daniel.borchmann@tu-dresden.de,
    tom.hanika@cs.uni-kassel.de,sergei.obj@gmail.com}
}

\maketitle

\blfootnote{The authors of this work are given in alphabetical order.
  No priority in authorship is implied.}

\begin{abstract}
  We revisit the notion of \emph{probably approximately correct
    implication bases} from the literature and present a first
  formulation in the language of formal concept analysis, with the
  goal to investigate whether such bases represent a suitable
  substitute for exact implication bases in practical use-cases.  To
  this end, we quantitatively examine the behavior of probably
  approximately correct implication bases on artificial and real-world
  data sets and compare their precision and recall with respect to
  their corresponding exact implication bases.  Using a small example,
  we also provide qualitative insight that implications from probably
  approximately correct bases can still represent meaningful knowledge
  from a given data set.
\end{abstract}

\keywords{Formal~Concept~Analysis, Implications, Query~Learning,
  PAC~Learning}

\section{Introduction}
\label{sec:introduction}

From a practical point of view, computing implication bases of formal
contexts is a challenging task.  The reason for this is twofold: on
the one hand, bases of formal contexts can be of exponential
size~\cite{DBLP:journals/jucs/Kuznetsov04} (see also an earlier work \cite{Kautz_1995} for the same result presented in different terms), and thus just writing out
the result can take a long time.  On the other hand, even in cases
where implication bases can be small, efficient methods to compute
them are unknown in general, and running times may thus be much higher
than necessary.  This is particularly true for computing the
\emph{canonical basis}, where only very few
algorithms~\cite{DBLP:conf/icfca/Ganter10,DBLP:journals/amai/ObiedkovD07}
are known, which all in addition to the canonical basis have to compute
the complete concept lattice.

Approaches to tackle this problem are to parallelize existing
algorithms~\cite{KrBo-CLA15}, or restrict attention to implication
bases that are more amenable to algorithmic treatment, such as proper
premises~\cite{RyDiBo-AMAI13} or D-bases~\cite{Adaricheva_2017}.  The
latter usually comes with the downside that the number of implications
is larger than necessary.  A further, rather pragmatic approach is to
consider implications as strong association rules and employ highly
optimized association rule miners, but then the number of resulting
implications increases even more.

In this work, we want to introduce another approach, which is
conceptually different from those previously mentioned: instead of
computing exact bases that can be astronomically large and hard to
compute, we propose to compute \emph{approximately correct} bases that
capture essential parts of the implication theory of the given data
set, and that are easier to obtain.  To facilitate algorithmic
amenability, it turns out to be a favorable idea to compute bases that
are approximately correct \emph{with high probability}.  Those bases
are called \emph{probably approximately correct bases} (PAC~bases),
and they can be computed in polynomial time.

PAC~bases allow to relax the rather strong condition of computing an
exact representation of the implicational knowledge of a data set.
However, this new freedom comes at the price of the uncertainty that
approximation always brings: is the result suitable for my intended
application?  Of course, the answer to this questions depends deeply
on the application in mind and cannot be given in general.  On the
other hand, some general aspects of the usability of PAC~bases can be
investigated, and it is the purpose of this work to provide first
evidence that such bases can indeed be useful.  More precisely, we
want to show that despite the probabilistic nature of these bases, the
results they provide are indeed not significantly different from the
actual bases (in a certain sense that we shall make clear later), and
that the returned implication sets can contain meaningful
implications. To this end, we investigate PAC~bases on both artificial
and real-world data sets and discuss their relationships with their
exact counterparts.

The idea of considering PAC~bases is not new~\cite{Kautz_1995}, but
has somehow not received much attention as a different, and maybe
tantamount, approach to extract implicational knowledge from formal
contexts.  Moreover, PAC~bases also allow interesting
connections between formal concept analysis and \emph{query
  learning}~\cite{Angluin_1988} (as we shall see), a connection that
with respect to attribute exploration awaits further investigation.

The paper is structured as follows.  After a brief review of related
work in Section~\ref{sec:related-work}, we shall introduce probably
approximately correct bases in Section~\ref{sec:comp-appr-bases},
including a means to compute them based on results from query
learning.  In Section~\ref{sec:usability}, we discuss usability
issues, both from a quantitative and a qualitative point of view.  We
shall close our discussion with summary and outlook in
Section~\ref{sec:summary-outlook}.

\section{Related Work}
\label{sec:related-work}

Approximately learning concepts with high probability has first been
introduced in the seminal work by Valiant~\cite{valiant84learnable}.
From this starting point, \emph{probably approximately correct}
learning has come a long way and has been applied in a variety of
use-cases.  Work that is particularly relevant for our concerns is by
\citeauthor{Kautz_1995} on Horn approximation of empirical
data~\cite{Kautz_1995}. In there a first algorithm for computing
probably approximately correct implication bases for a given data set
has been proposed \cite[Theorem~15]{Kautz_1995}. This algorithm has
the benefit that all closed sets of the actual implication theory will
be among the ones of the computed theory, but the latter may possibly
contain more.  However, the algorithm requires direct access to the
actual data, which therefore must be given explicitly.

Approximately correct bases have also been considered before in the
realm of formal concept analysis, although not much.  The dissertation
by Babin~\cite{Babin12} contains results about \emph{approximate
  bases} and some first experimental evaluations.  However, this
notion of approximation is different from the one we want to employ in
this work: Babin defines a set of implications $\mathcal{H}$ to be an
approximation of a given set $\mathcal{L}$ if the closure operators of
$\mathcal{L}$ and $\mathcal{H}$ coincide on most sets.  In our work,
$\mathcal{H}$ is an approximation of $\mathcal{L}$ if and only if the
number of models in which $\mathcal{H}$ and $\mathcal{L}$ differ is
small.  Details will follow in the next section.  The approach of
considering implications with \emph{high confidence} in addition to
exact implications can also be seen as a variant of approximate
bases~\cite{Borchmann14}.

To compute PAC bases, we shall make use of results from the research
field of \emph{query learning}~\cite{Angluin_1988}.  More precisely,
we shall make use of the work by \citeauthor{Angluin_1992} on learning
Horn theories through query learning~\cite{Angluin_1992}, where the
target Horn theory is accessible only through a \emph{membership} and
an \emph{equivalence} oracle.  Using existing results, this algorithm
can easily be adapted to compute probably approximately correct Horn
theories, and we shall give a self-contained explanation of the
algorithm in this work. Related to query learning is \emph{attribute
  exploration}~\cite{DBLP:books/sp/GanterO16}, an algorithm from
formal concept analysis that allows to learn Horn theories from
domain experts.

\section{Probably Approximately Correct Bases from Query Learning}
\label{sec:comp-appr-bases}

Before introducing approximately correct and probably approximately
correct bases in Section~\ref{sec:approximate-bases}, we shall first
give a brief (and dense) recall in
Section~\ref{sec:bases-implications} of the relevant definitions and
terminologies from formal concept analysis used in this work.  We then
demonstrate in Section~\ref{sec:how-compute-probably} how probably
approximately correct bases can be computed using ideas from query
learning.

\subsection{Bases of Implications}
\label{sec:bases-implications}

Recall that a formal context is just a triple $\mathbb K = (G,M,I)$
where $G$ and $M$ are sets and $I \subseteq G \times M$.  We shall
denote the derivation operators in $\mathbb K$ with the usual
$\cdot'$-notation, i.e., $A' = \set{ m \in M \mid \forall g \in
  A\colon (g,m) \in I}$ and $B' = \set{ g \in G \mid \forall m \in
  B\colon (g,m) \in I}$ for $A \subseteq G$ and $B \subseteq M$.  The
sets $A$ and $B$ are \emph{closed} in $\mathbb K$ if $A = A''$ and $B
= B''$, respectively.  The set of subsets of $M$ closed in $\mathbb K$
is called the set of \emph{intents} of $\mathbb K$ and is denoted by
$\mathbb \Int(\mathbb K)$.

An \emph{implication} over $M$ is an expression $X \to Y$ where $X, Y
\subseteq M$.  The set of all implications over $M$ is denoted by
$\Imp(M)$.  A set $A \subseteq M$ is \emph{closed} under $X \to Y$ if
$X \not\subseteq A$ or $Y \subseteq A$.  In this case, $A$ is also
called a \emph{model} of $X \to Y$ and $X \to Y$ is said to \emph{respect} $A$.  The set $A$ is \emph{closed}
under a set of implications $\mathcal{L}$ if $A$ is closed under every
implication in $\mathcal{L}$.  The set of all sets closed under
$\mathcal{L}$, the \emph{models} of $\mathcal{L}$, is denoted by
$\Mod(\mathcal{L})$.

The implication $X \to Y$ is \emph{valid} in $\mathbb K$ if $\{g\}'$ is
closed under $X \to Y$ for all $g \in G$ (equivalently: $X' \subseteq
Y'$, $Y \subseteq X''$).  A set $\mathcal{L}$ of implications is
\emph{valid} in $\mathbb K$ if every implication in $\mathcal{L}$ is
valid in $\mathbb K$.  The set of all implications valid in $\mathbb
K$ is the \emph{theory} of $\mathbb K$, denoted by $\Th(\mathbb K)$.
Clearly, $\Mod(\Th(\mathbb K)) = \Int(\mathbb K)$.

Let $\mathcal{L} \subseteq \Imp(M)$ and $(X \to Y) \in \Imp(M)$.  We
say that $X \to Y$ \emph{follows} from $\mathcal{L}$, written
$\mathcal{L} \models (X \to Y)$, if for all contexts $\mathbb L$ where
$\mathcal{L}$ is valid, $X \to Y$ is valid as well.  Equivalently,
$\mathcal{L} \models (X \to Y)$ if and only if $Y \subseteq
\mathcal{L}(X)$, where $\mathcal{L}(X)$ is the $\subseteq$-smallest
superset of $X$ that is closed under all implications from
$\mathcal{L}$.

A set $\mathcal{L} \subseteq \Imp(M)$ is an \emph{exact implication
  basis} (or simply \emph{basis}) of $\mathbb K$ if $\mathcal{L}$ is
\emph{sound} and \emph{complete} for $\mathbb K$.  Here the set
$\mathcal{L}$ is \emph{sound} for $\mathbb K$ if it is valid in
$\mathbb K$.  Dually, $\mathcal{L}$ is \emph{complete} for $\mathbb K$
if every implication valid in $\mathbb K$ follows from $\mathcal{L}$.
Alternatively, $\mathcal{L}$ is a basis of $\mathbb K$ if the models of
$\mathcal{L}$ are the intents of $\mathbb K$.

A basis $\mathcal{L}$ of $\mathbb K$ is called \emph{irredundant} if
no strict subset of $\mathcal{L}$ is a basis of $\mathbb K$.  A basis
$\mathcal{L}$ of $\mathbb K$ is called \emph{minimal} if there does
not exist another basis $\hat{\mathcal{L}}$ of $\mathbb K$ with
strictly fewer elements, i.e., with $\lvert \hat{\mathcal{L}} \rvert <
\lvert \mathcal{L} \rvert$.  Every minimal basis is clearly
irredundant, but the converse is not true in general.

For the case of finite contexts $\mathbb K = (G,M,I)$, i.e., where
both $G$ and $M$ are finite, a minimal basis can be given explicitly as
the so-called \emph{canonical basis} of $\mathbb K$ \cite{guigues1986famille}.  For this, recall
that a \emph{pseudo-intent} of $\mathbb K$ is a set $P \subseteq M$
such that $P \neq P''$ and each pseudo-intent $Q \subsetneq P$
satisfies $Q'' \subseteq P$.  The canonical basis is defined as $\Can(\mathbb K)
= \set{ P \to P'' \mid P \text{ pseudo-intent of } \mathbb K}$.  It is
well known that $\Can(\mathbb K)$ is a minimal basis of $\mathbb K$.

\subsection{Probably Approximately Correct Implication Bases}
\label{sec:approximate-bases}

Exact implication bases, in particular irredundant or minimal ones,
provide a convenient way to represent the theory of a formal context
in a compact way.  However, the computation of such bases is -- not
surprisingly -- difficult, and currently known algorithms impose an
enormous additional overhead on the already high running times.  On
the other hand, data sets originating from real-world data are usually
\emph{noisy}, i.e., contain errors and inaccuracies, and computing
exact implication bases of such data sets is futile from the very
beginning: in these cases, it is sufficient to compute
an \emph{approximation} of the exact implication basis.  The only
thing one has to make sure is that such bases have a controllable
error lest they be unusable.

More formally, instead of computing exact implication bases of finite
contexts $\mathbb K$, we shall consider \emph{approximately correct
  implication bases} of $\mathbb K$, hoping that such approximations
still capture essential parts of the theory of $\mathbb K$, while being
easier to compute.  Clearly, the precise notion of approximation
determines the usefulness of this approach.  In this work, we want to
take the stance that a set $\mathcal{H}$ of implications is an
\emph{approximately correct basis} of $\mathbb K$ if the closed sets of
$\mathcal{H}$ are \enquote{most often} closed in $\mathbb K$ and vice
versa.  This is formalized in the following definition.

\begin{definition}
  Let $M$ be a finite set and let $\mathbb K = (G,M,I)$ be a formal
  context.  A set $\mathcal{H} \subseteq \Imp(M)$ is called an
  \emph{approximately correct basis} of $\mathbb K$ with
  \emph{accuracy} $\epsilon > 0$ if
  \begin{equation*}
    \dist(\mathcal{H},\mathbb K) \coloneqq \frac {\lvert
      \Mod(\mathcal{H}) \symdiff \Int(\mathbb K)\rvert} {2^{\lvert M
        \rvert}} < \epsilon.
  \end{equation*}
  We call $\dist(\mathcal{H}, \mathbb K)$ the \emph{Horn-distance}
  between $\mathcal{H}$ and $\mathbb K$.
\end{definition}

The notion of Horn-distance can easily be extended to sets of
implications: the \emph{Horn-distance} between $\mathcal{L} \subseteq
\Imp(M)$ and $\mathcal{H} \subseteq \Imp(M)$ is defined as in the
definition above, replacing $\Int(\mathbb K)$ by $\Mod(\mathcal{L})$.
Note that with this definition, $\dist(\mathcal{L}, \mathbb K) =
\dist(\mathcal{L}, \mathcal{H})$ for every exact implication basis
$\mathcal{H}$ of $\mathbb K$.  On the other hand, every set
$\mathcal{L}$ can be represented as a basis of a formal context
$\mathbb K$, and, in this case, $\dist(\mathcal{H},\mathcal{L}) =
\dist(\mathcal{H},\mathbb K)$ for all $\mathcal{H} \subseteq \Imp(M)$.

For practical purposes, it may be enough to be able to compute
approximately correct bases with high probability.  This eases algorithmic
treatment from a theoretical perspective, in the sense that it is
possible to find algorithms that run in polynomial time.

\begin{definition}
  Let $M$ be a finite set and let $\mathbb K = (G,M,I)$ be a formal
  context.  Let $\Omega = (W,\mathcal{E}, \Pr)$ be a probability
  space.  A random variable $\mathcal{H}\colon \Omega \to
  \subsets(\Imp(M))$ is called a \emph{probably approximately correct
    basis} (PAC~basis) of $\mathbb K$ with \emph{accuracy} $\epsilon >
  0$ and \emph{confidence} $\delta > 0$ if
  $\Pr(\dist(\mathcal{H},\mathbb K) > \epsilon) < \delta$.
\end{definition}

\subsection{How to Compute Probably Approximately Correct Bases}
\label{sec:how-compute-probably}

We shall make use of query learning to compute PAC~bases.  The
principal goal of query learning is to find explicit representation of
\emph{concepts} under the restriction of only having access to certain
kinds of \emph{oracles}.  The particular case we are interested in is
to learn conjunctive normal forms of Horn formulas from
\emph{membership} and \emph{equivalence} oracles.  Since conjunctive
normal forms of Horn formulas correspond to sets of unit implications,
this use-case allows learning sets of implications from oracles.
Indeed, the restriction to unit implications can be dropped, as we
shall see shortly.

Let $\mathcal{L} \subseteq \Imp(M)$ be a set of implications.  A
\emph{membership oracle} for $\mathcal{L}$ is a function $f \colon
\subsets(M) \to \{\top,\bot\}$ such that $f(X) = \top$ for $X
\subseteq M$ if and only if $X$ is a model of $\mathcal{L}$.  An
\emph{equivalence oracle} for $\mathcal{L}$ is a function $g \colon
\subsets(\Imp(M)) \to \{\top\} \cup \subsets(M)$ such that
$g(\mathcal{H}) = \top$ if and only if $\mathcal{H}$ is equivalent to
$\mathcal{L}$, i.e., $\Mod(\mathcal{H}) = \Mod(\mathcal{L})$.
Otherwise, $X \coloneqq g(\mathcal{H})$ is a \emph{counterexample} for
the equivalence of $\mathcal{H}$ and $\mathcal{L}$, i.e., $X \in
\Mod(\mathcal{H}) \symdiff \Mod(\mathcal{L})$.  We shall call $X$ a
\emph{positive counterexample} if $X \in
\Mod(\mathcal{L})\setminus\Mod(\mathcal{H})$, and a \emph{negative
  counterexample} if $X \in
\Mod(\mathcal{H})\setminus\Mod(\mathcal{L})$.

To learn sets of implications through membership and equivalence
oracles, we shall use the well-known HORN1 algorithm~\cite{Angluin_1992}.
Pseudocode describing this algorithm is given in
Figure~\ref{fig:horn1-in-fca}, where we have adapted the algorithm to
use FCA terminology.

\begin{figure}[tp]
  \centering
\begin{lstlisting}
define horn1($M$,member?,equivalent?)
  $\mathcal{H}$ := $\emptyset$
  while $C$ := equivalent?($\mathcal{H}$) is a counterexample do
    if some $A \to B \in \mathcal{H}$ does not respect $C$ then
      replace all implications $A \to B \in \mathcal{H}$
        not respecting $C$ by $A \to B \cap C$
    else
      find first $A \to B \in \mathcal{H}$ such that
        $C \cap A \neq A$ and member?($C \cap A$) returns false
      if $A \to B$ exists then
        replace $A \to B$ by $C \cap A \to B \cup (A \setminus C)$
      else
        add $C \to M$ to $\mathcal{H}$
      end
    end
  end
  return $\mathcal{H}$
end
\end{lstlisting}
  \caption{HORN1, adapted to FCA~terminology}
  \label{fig:horn1-in-fca}
\end{figure}

The principal way the HORN1 algorithm works is the following: keeping
a \emph{working hypothesis} $\mathcal{H}$, the algorithm repeatedly
queries the equivalence oracle about whether $\mathcal{H}$ is
equivalent to the sought basis $\mathcal{L}$.  If this is the case, the
algorithm stops.  Otherwise, it receives a counterexample $C$ from the
oracle, and depending on whether $C$ is a positive or a negative
counterexample, it adapts the hypothesis accordingly.  In the case $C$
is a positive counterexample, all implications in $\mathcal{H}$ not
respecting $C$ are modified by removing attributes not in $C$ from their conclusions.  Otherwise, $C$ is a negative
counterexample, and $\mathcal{H}$ must be adapted so that $C$ is not
a model of $\mathcal{H}$ anymore.  This is done by searching for an
implication $(A \to B) \in \mathcal{H}$ such that $C \cap A \neq
A$ is not a model of $\mathcal{L}$, employing the membership query.  If
such an implication is found, it is replaced by
$C \cap A \to B \cup (A\setminus C)$.  Otherwise, the implication $C \to M$
is simply added to $\mathcal{H}$.

With this algorithm, it is possible to learn implicational theories
from equivalence and membership oracles alone.  Indeed, the resulting
set $\mathcal{H}$ of implications is always the canonical basis equivalent
to $\mathcal{L}$~\cite{Arias_2011}.  Moreover, the algorithm always
runs in polynomial time in $\lvert M \rvert$ and  the size of the sought implication
basis~\cite[Theorem~2]{Angluin_1992}.

We now want to describe an adaption of the HORN1 algorithm that allows
to compute PAC~bases in polynomial time in size of $M$, the output
$\mathcal{L}$, as well as $1/\epsilon$ and $1/\delta$.  For this we
modify the original algorithm of Figure~\ref{fig:horn1-in-fca} as
follows: given a set $\mathcal{H}$ of implications, instead of
checking exactly whether $\mathcal{H}$ is equivalent to the sought
implicational theory $\mathcal{L}$, we employ the strategy of
\emph{sampling}~\cite{Angluin_1988} to simulate the equivalence
oracle.  More precisely, we sample for a certain number of iterations
subsets $X$ of $M$ and check whether $X$ is a model of $\mathcal{H}$
and not of $\mathcal{L}$ or vice versa.  In other words, we ask
whether $X$ is an element of $\Mod(\mathcal{H}) \symdiff
\Mod(\mathcal{L})$.  Intuitively, given enough iterations, the
sampling version of the equivalence oracle should be close to the
actual equivalence oracle, and the modified algorithm should return a
basis that is close to the sought one.

\begin{figure}[tp]
  \centering
\begin{lstlisting}
define approx-equivalent?(member?,$\epsilon$,$\delta$)
  $i$ := 0 ;; number of equivalence queries

  return function($\mathcal{H}$) begin
      $i$ := $i$ + 1
      for $\ell_{i}$ times do
        choose $X \subseteq M$
        if (member?($X$) and $X \not\in \Mod(\mathcal{H})$) or
           (not member?($X$) and $X \in \Mod(\mathcal{H})$) then
          return $X$
        end
      end
      return true
    end
end

define pac-basis($M$,member?,$\epsilon$,$\delta$)
  return horn1(M,member?,approx-equivalent?(member?,$\epsilon$,$\delta$))
end
\end{lstlisting}
  \caption{Computing PAC bases}
  \label{fig:pac-bases-from-horn1}
\end{figure}

Pseudocode implementing the previous elaboration is given in
Figure~\ref{fig:pac-bases-from-horn1}, and it requires some further
explanation.  The algorithm computing a PAC~basis of an
implication theory given by access to a membership oracle is called
\lstinline{pac-basis}.  This function is implemented in terms of
\lstinline{horn1}, which, as explained before, receives as equivalence
oracle a sampling algorithm that uses the membership oracle to decide
whether a randomly sampled subset is a counterexample.  This sampling
equivalence oracle is returned by \lstinline{approx-equivalent?}, and
manages an internal counter $i$ keeping track of the number of
invocations of the returned equivalence oracle.  Every time this
oracle is called, the counter is incremented and thus influences the
number $\ell_{i}$ of samples the oracle draws.

The question now is whether the parameters $\ell_{i}$ can be chosen
so that \lstinline{pac-basis} computes a PAC~basis in every case.
The following theorem gives an affirmative answer.

\begin{theorem}
  \label{thm:correctness-of-pac-basis}
  Let $0 < \epsilon \le 1$ and $0 < \delta \leq 1$.  Set
  \begin{equation*}
    \ell_{i} \coloneqq \left\lceil \frac1\epsilon \cdot \left(i -
        \log_{2} \delta\right)\right\rceil.
  \end{equation*}
  Denote with $\mathcal{H}$ the random variable representing the
  outcome of the call to \lstinline{pac-basis} with arguments $M$ and
  the membership oracle of $\mathcal{L}$.  Then $\mathcal{H}$ is a
  PAC~basis for $\mathcal{L}$, i.e., $\Pr\left(\dist(\mathcal{H},
    \mathcal{L}) > \epsilon \right) < \delta$, where $\Pr$ denotes the
  probability distribution over all possible runs of
  \lstinline{pac-basis} with the given arguments.  Moreover,
  \lstinline{pac-basis} finishes in time polynomial in $\lvert
  M\rvert$, $\lvert \mathcal{L}\rvert$, $1/\epsilon$, and $1/\delta$.
\end{theorem}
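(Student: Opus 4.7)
The plan is to combine Angluin's analysis of \lstinline{horn1} with a standard equivalence-by-sampling reduction. A first observation is that \lstinline{approx-equivalent?} only ever returns values that have been certified against the membership oracle, so every non-\lstinline{true} response is a genuine counterexample. Hence the internal evolution of $\mathcal{H}$ inside \lstinline{pac-basis} is identical to a run of \lstinline{horn1} against an exact equivalence oracle, and by \cite[Theorem~2]{Angluin_1992} the total number $N$ of equivalence queries issued by \lstinline{pac-basis} is bounded by a polynomial $p(\lvert M\rvert, \lvert \mathcal{L}\rvert)$ that does not depend on $\epsilon$ or $\delta$. In particular, the only route to a bad output is that \lstinline{approx-equivalent?} answers \lstinline{true} at some iteration $i$ while the current hypothesis $\mathcal{H}_i$ still has $\dist(\mathcal{H}_i, \mathcal{L}) > \epsilon$.

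Next I would bound the probability of this failure event. Since samples are drawn uniformly from $\subsets(M)$, the probability that a single sample witnesses a difference between $\Mod(\mathcal{H}_i)$ and $\Mod(\mathcal{L})$ equals $\dist(\mathcal{H}_i, \mathcal{L})$. Conditioned on $\dist(\mathcal{H}_i, \mathcal{L}) > \epsilon$, the probability that none of the $\ell_i$ independent samples is a counterexample is therefore at most $(1-\epsilon)^{\ell_i}$. The key elementary inequality is
\begin{equation*}
  (1-\epsilon)^{\ell_i}
  \;\le\; \bigl((1-\epsilon)^{1/\epsilon}\bigr)^{i - \log_2\delta}
  \;\le\; 2^{-(i - \log_2\delta)}
  \;=\; \delta \cdot 2^{-i},
\end{equation*}
where the first step uses $\ell_i \ge (i - \log_2\delta)/\epsilon$, the second uses the elementary fact $(1-\epsilon)^{1/\epsilon} \le 1/e < 1/2$ valid for $\epsilon \in (0,1]$, and positivity of the exponent follows from $\delta \le 1$, which forces $i - \log_2\delta \ge 1$.

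Summing over the at most $N$ iterations via a union bound then yields
\begin{equation*}
  \Pr\bigl(\dist(\mathcal{H}, \mathcal{L}) > \epsilon\bigr)
  \;\le\; \sum_{i \ge 1} (1-\epsilon)^{\ell_i}
  \;<\; \sum_{i=1}^{\infty} \delta \cdot 2^{-i}
  \;=\; \delta,
\end{equation*}
which is the required PAC guarantee. Since \lstinline{horn1} itself runs in time polynomial in $\lvert M\rvert$ and $\lvert \mathcal{L}\rvert$, and each equivalence query draws $\ell_i = O((i + \log(1/\delta))/\epsilon)$ samples each costing one membership query plus a check against the current $\mathcal{H}$, the overall running time is polynomial in $\lvert M\rvert$, $\lvert \mathcal{L}\rvert$, $1/\epsilon$, and $1/\delta$ (in fact even in $\log(1/\delta)$). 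The main obstacle I anticipate is the first paragraph's reduction: one has to argue carefully that replacing the exact oracle by \lstinline{approx-equivalent?} does not alter the trajectory of $\mathcal{H}$ beyond possibly terminating earlier, so that Angluin's bound on the number of equivalence queries remains applicable verbatim.
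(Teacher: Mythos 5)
Your proposal is correct and follows essentially the same route as the paper's proof: bound the per-iteration failure probability by $(1-\epsilon)^{\ell_i}\le\delta 2^{-i}$ using the elementary inequality $(1-\epsilon)^{1/\epsilon}<1/2$ (the paper phrases this as $-\log_2(1-\epsilon)>\epsilon$), then sum the geometric series over the polynomially many equivalence queries. Your explicit observation that every non-\lstinline{true} answer of \lstinline{approx-equivalent?} is a genuine counterexample, so Angluin's query bound applies verbatim, is a point the paper leaves implicit, but it is the same argument.
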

\begin{proof}
  We know that the runtime of \lstinline{pac-basis} is bounded by a
  polynomial in the given parameters, provided we count the
  invocations of the oracles as single steps.  Moreover, the numbers
  $\ell_{i}$ are polynomial in $\lvert M\rvert$, $\lvert
  \mathcal{L}\rvert$, $1/\epsilon$, and $1/\delta$ (since $i$ is polynomial in $\lvert M\rvert$ and $\lvert \mathcal{L}\rvert$), and thus
  \lstinline{pac-basis} always runs in polynomial time.

  The algorithm \lstinline{horn1} requires a number of counterexamples polynomial in $\lvert M\rvert$ and
  $\lvert \mathcal{L}\rvert$. Suppose that this number is at most $k$.  We want to ensure that in $i$th call to
  the sampling equivalence oracle, the probability $\delta_{i}$ of
  failing to find a counterexample (if one exists) is at most $\delta/2^{i}$.  Then
  the probability of failing to find a counterexample in any of at most $k$
  calls to the sampling equivalence oracle is at most
  \begin{equation*}
    \frac\delta2 + \left(1-\frac\delta2\right)\cdot\left(\frac\delta4 +
      \left(1-\frac\delta4\right)\left(\frac\delta8 +
        \left(1-\frac\delta8\right)\cdot\biggl(\dots \biggr)\right)\right)
    \le \frac\delta2 + \frac\delta4 + \dots + \frac\delta{2^{k}} < \delta.
  \end{equation*}

  Assume that in some step $i$ of the algorithm, the currently
  computed hypothesis $\hat{\mathcal{H}}$ satisfies
  \begin{equation}
    \label{eq:1}
    \dist(\hat{\mathcal{H}},\mathcal{L}) = \frac{\lvert
      \Mod\hat{\mathcal{H}} \symdiff \Mod\mathcal{L}\rvert}{2^{\lvert
        M\rvert}} > \epsilon.
  \end{equation}
  Then choosing $X \in \Mod\hat{\mathcal{H}} \symdiff \Mod\mathcal{L}$
  succeeds with probability at least $\epsilon$, and the probability
  of failing to find a counterexample in $\ell_{i}$ iterations is at
  most $(1-\epsilon)^{\ell_{i}}$. We want to choose $\ell_{i}$
  such that $(1-\epsilon)^{\ell_{i}} < \delta_{i}$.  We obtain
  \begin{equation*}
    \log_{1-\epsilon}\delta_{i} =
    \frac{\log_{2}\delta_{i}}{\log_{2}(1-\epsilon)} =
    \frac{\log_{2}(1/\delta_{i})}{-\log_{2}(1-\epsilon)} \leq
    \frac{\log_{2}(1/\delta_{i})}{\epsilon},
  \end{equation*}
  because $-\log_{2}(1-\epsilon) > \epsilon$.  Thus, choosing any
  $\ell_{i}$ satisfying $\ell_{i} >
  \frac1\epsilon\log_{2}\frac1{\delta_{i}}$ is sufficient
  for our algorithm to be approximately correct.  In particular, we can set
  \begin{equation*}
    \ell_{i} \coloneqq \left\lceil \frac1\epsilon
      \log_{2}\frac1{\delta_{i}}\right\rceil =
    \left\lceil\frac1\epsilon
      \log_{2}\frac{2^{i}}{\delta} \right\rceil =
    \left\lceil \frac1\epsilon \left(i - \log_{2}\delta\right) \right\rceil,
  \end{equation*}
  as claimed.  This finishes the proof.
\end{proof}

The preceding argumentation relies on the fact that we choose subsets
$X \subseteq M$ uniformly at random.  However, it is conceivable that,
for certain applications, computing PAC~bases for uniformly sampled
subsets $X \subseteq M$ might be too much of a restriction, in
particular, when certain combinations of attributes are more likely
than others.  In this case, PAC~bases are sought with respect to some
\emph{arbitrary distribution} of $X \subseteq M$.

It turns out that such a generalization of
Theorem~\ref{thm:correctness-of-pac-basis} can easily be obtained.  For
this, we observe that the only place where uniform sampling is needed
is in Equation~\eqref{eq:1} and the subsequent argument that choosing
a counterexample $X \in \Mod(\mathcal{\hat{H}}) \symdiff
\Mod(\mathcal{L})$ succeeds with probability at least $\epsilon$.

To generalize this to an arbitrary distribution, let $X$ be a random
variable with values in $\subsets(M)$, and denote the corresponding
probability distribution with $\Pr_{1}$.  Then Equation~\eqref{eq:1}
can be generalized to
\begin{equation*}
  \Pr_{1}(X \in \Mod(\hat{\mathcal{H}}) \symdiff \Mod(\mathcal{L})) > \epsilon.
\end{equation*}
Under this condition, choosing a counterexample in
$\Mod(\hat{\mathcal{H}}) \symdiff \Mod(\mathcal{L})$ still succeeds
with probability at least $\epsilon$, and the rest of the proof goes
through.  More precisely, we obtain the following result.

\begin{theorem}
  Let $M$ be a finite set, $\mathcal{L} \subseteq \Imp(M)$.  Denote
  with $X$ a random variable taking subsets of $M$ as values, and let
  $\Pr_{1}$ be the corresponding probability distribution.  Further
  denote with $\mathcal{H}$ the random variable representing the
  results of \lstinline{pac-basis} when called with arguments $M$,
  $\epsilon > 0$, $\delta > 0$, a membership oracle for $M$, and where
  the sampling equivalence oracle uses the random variable $X$ to draw
  counterexamples.  If $\Pr_{2}$ denotes the corresponding probability
  distribution for $\mathcal{H}$, then
  \begin{equation*}
    \Pr_{2}\Bigl( \Pr_{1}\bigl(X \in \Mod(\mathcal{H}) \symdiff
        \Mod(\mathcal{L})\bigr) > \epsilon \Bigr) < \delta.
  \end{equation*}
  Moreover, the runtime of \lstinline{pac-basis} is bounded by a
  polynomial in the sizes of $M$, $\mathcal{L}$ and the values
  $1/\epsilon$, $1/\delta$.
\end{theorem}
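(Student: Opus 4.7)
The plan is to observe that the proof of \cref{thm:correctness-of-pac-basis} already does almost all of the work, and that only one step needs to be reinterpreted to accommodate an arbitrary sampling distribution. I would therefore structure the argument as a diff against the earlier proof, pointing to the single place where uniformity was invoked and showing that the same reasoning still goes through.

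First I would dispatch the runtime claim: the sample sizes $\ell_i$ and the bound on the number of iterations of \lstinline{horn1} depend only on $\lvert M\rvert$, $\lvert\mathcal{L}\rvert$, $1/\epsilon$, and $1/\delta$, not on how the subsets $X \subseteq M$ are drawn, so the polynomial bound transfers verbatim. Next I would revisit the probabilistic core of the earlier proof. The chained bound
\[
\frac{\delta}{2} + \left(1-\frac{\delta}{2}\right)\left(\frac{\delta}{4} + \left(1-\frac{\delta}{4}\right)\bigl(\cdots\bigr)\right) < \delta
\]
on the overall $\Pr_2$-probability that the sampling equivalence oracle ever fails to report a counterexample (when one exists) depends only on the per-iteration failure bound $\delta_i = \delta/2^i$, not on the distribution used to draw $X$. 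Hence provided we can still guarantee $(1-\epsilon)^{\ell_i} < \delta_i$ in each round $i$ in which the current hypothesis $\hat{\mathcal{H}}$ is $\epsilon$-far from $\mathcal{L}$ (now measured through $\Pr_1$), the same choice $\ell_i = \lceil(i - \log_2\delta)/\epsilon\rceil$ continues to work.

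The only place where uniformity was used in the earlier proof is \eqref{eq:1}, which turned the assumption $\dist(\hat{\mathcal{H}},\mathcal{L}) > \epsilon$ into the statement that a fresh sample lies in $\Mod(\hat{\mathcal{H}}) \symdiff \Mod(\mathcal{L})$ with probability at least $\epsilon$. In the present setting this assumption is replaced directly by $\Pr_1(X \in \Mod(\hat{\mathcal{H}}) \symdiff \Mod(\mathcal{L})) > \epsilon$, which by definition of $\Pr_1$ gives exactly the same per-draw lower bound, and hence the same $(1-\epsilon)^{\ell_i}$ estimate for the probability that round $i$ fails to produce a counterexample.

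The main obstacle, such as it is, is bookkeeping: one has to keep the two probability spaces cleanly separated — $\Pr_1$ governing the samples $X$, and $\Pr_2$ governing the execution of \lstinline{pac-basis} — and phrase the target event \enquote{the returned $\mathcal{H}$ satisfies $\Pr_1(X \in \Mod(\mathcal{H}) \symdiff \Mod(\mathcal{L})) > \epsilon$} as a sub-event of the $\Pr_2$-event \enquote{at some call the sampling oracle missed an available counterexample}. Once this inclusion is in place, applying the chained bound inherited from the earlier proof immediately yields $\Pr_2\bigl(\Pr_1(X \in \Mod(\mathcal{H}) \symdiff \Mod(\mathcal{L})) > \epsilon\bigr) < \delta$, completing the argument.
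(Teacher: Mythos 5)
Your proposal is correct and follows exactly the paper's route: the paper likewise observes that the only use of uniform sampling in the proof of Theorem~\ref{thm:correctness-of-pac-basis} is Equation~\eqref{eq:1} together with the claim that a sampled set hits $\Mod(\hat{\mathcal{H}}) \symdiff \Mod(\mathcal{L})$ with probability at least $\epsilon$, replaces that hypothesis by $\Pr_{1}(X \in \Mod(\hat{\mathcal{H}}) \symdiff \Mod(\mathcal{L})) > \epsilon$, and lets the rest of the argument (including the runtime bound) go through unchanged. Your additional care in separating the two probability spaces and in casting the failure event as a sub-event of \enquote{some oracle call missed an available counterexample} only makes explicit what the paper leaves implicit.
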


\section{Usability}
\label{sec:usability}

We have seen that PAC~bases can be computed fast, but the question
remains whether they are a useful representation of the implicational
knowledge embedded in a given data set.  To approach this question, we
now want to provide a first assessment of the usability in terms of
quality and quantity of the approximated implications. To this end, we
conduct several experiments on artificial and real-world data sets.
In~\cref{sec:how-much-worse}, we measure the approximation quality
provided by PAC~bases.  Furthermore, in~\cref{sec:how-much-different}
we examine a particular context and argue that PAC~bases also provide
a meaningful approximation of the corresponding canonical basis.

\subsection{Practical Quality of Approximation}
\label{sec:how-much-worse}

In theory, PAC~bases provide good approximation of exact bases with
high probability. But how do they behave with respect to practical
situations?  To give first impressions on the answer to this question,
we shall investigate three different data sets.  First, we examine how
the \lstinline{pac-basis} algorithm performs on real-world formal
contexts.  For this we utilize a data set based on a public data-dump
of the BibSonomy platform, as described
in~\cite{conf/cla/BorchmannH16}. Our second experiment is conducted on
a subclass of artificial formal contexts.  As it was shown
in~\cite{conf/cla/BorchmannH16}, it is so far unknown how to generate
genuine random formal contexts.  Hence, we use the \enquote{usual way}
of creating artificial formal contexts, with all warnings in place:
for a given number of attributes and density, choose randomly a valid
number of objects and use a biased coin to draw the crosses.  The last
experiment is focused on repetition stability: we calculate PAC bases
of a fixed formal context multiple times and examine the standard
deviation of the results.

The comparison will utilize three different measures. For every
context in consideration, we shall compute the Horn-distance between
the canonical basis and the approximating bases returned by
\lstinline{pac-basis}.  Furthermore, we shall also make use of the
usual \emph{precision} and \emph{recall} measures, defined as follows.

\begin{definition}
  Let $M$ be a finite set and let $\mathbb{K}= (G,M,I)$ be a formal
  context.  Then the \emph{precision} and \emph{recall} of
  $\mathcal{H}$, respectively, are defined as
  \begin{align*}
  \precision(\mathbb{K},\mathcal{H})&\coloneqq \frac{|\{(A\to B)\in
      \mathcal{H}\mid \Can(\mathbb K)\models (A\to B)\}|}{|\mathcal{H}|},\\
  \recall(\mathbb{K},\mathcal{H})&\coloneqq \frac{|\{(A\to B)\in
      \Can(\mathbb K)\mid \mathcal{H}\models (A\to B)\}|}{|\Can(\mathbb K)|}.
  \end{align*}
\end{definition}

In other words, precision is measuring the fraction of valid
implications in the approximating basis $\mathcal{H}$, and recall is
measuring the fraction of valid implications in the canonical basis
that follow semantically from the approximating basis $\mathcal{H}$.
Since we compute precision and recall for multiple contexts in the
experiments, we consider the \emph{macro average} of those measures,
i.e., the mean of the values of these measure on the given contexts.

\subsubsection{BibSonomy Contexts}
\label{sec:bibsonomy-contexts}

\begin{figure}[t]
  \centering
  \includegraphics[width=0.32\textwidth]{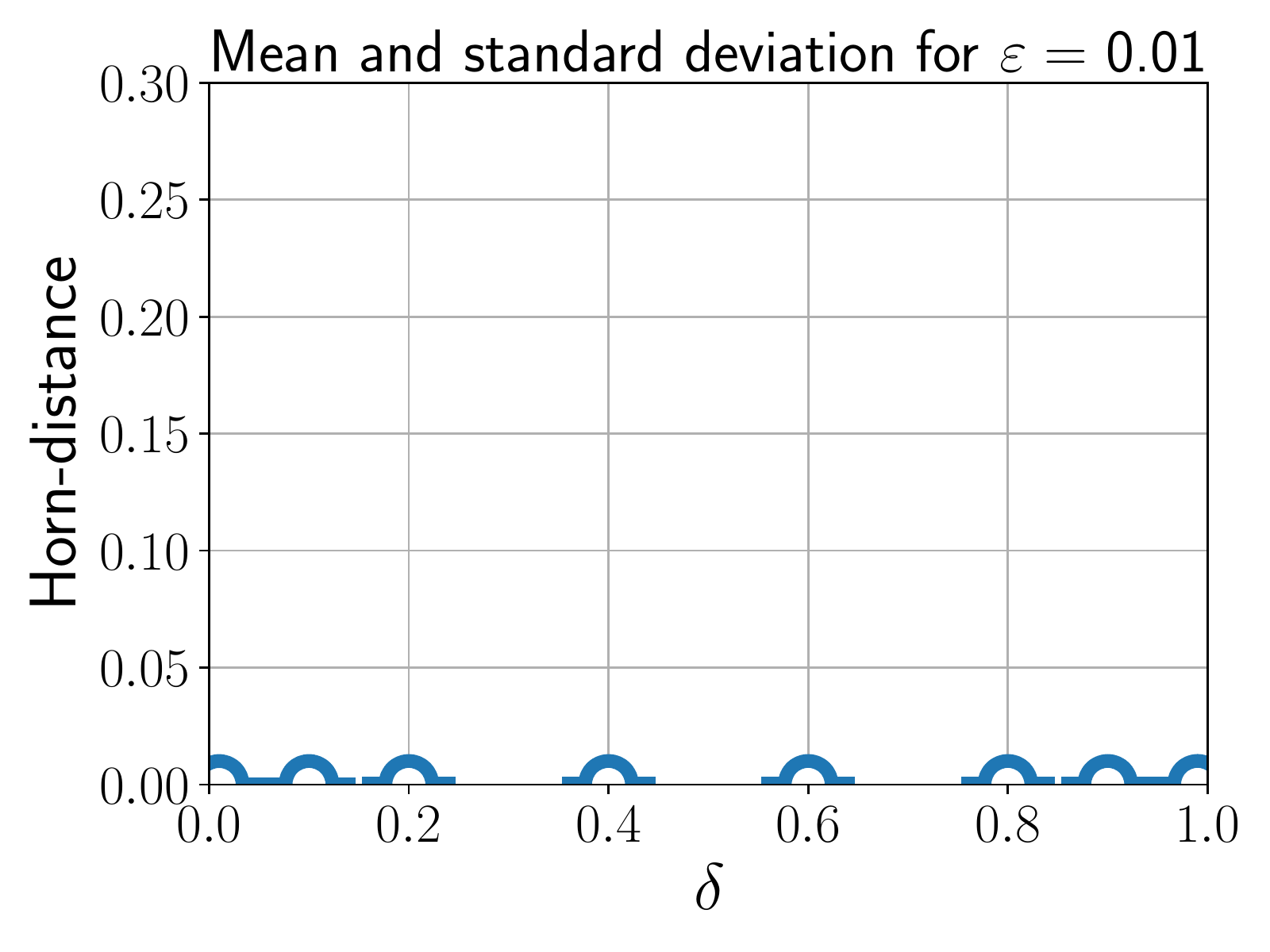}
  \includegraphics[width=0.32\textwidth]{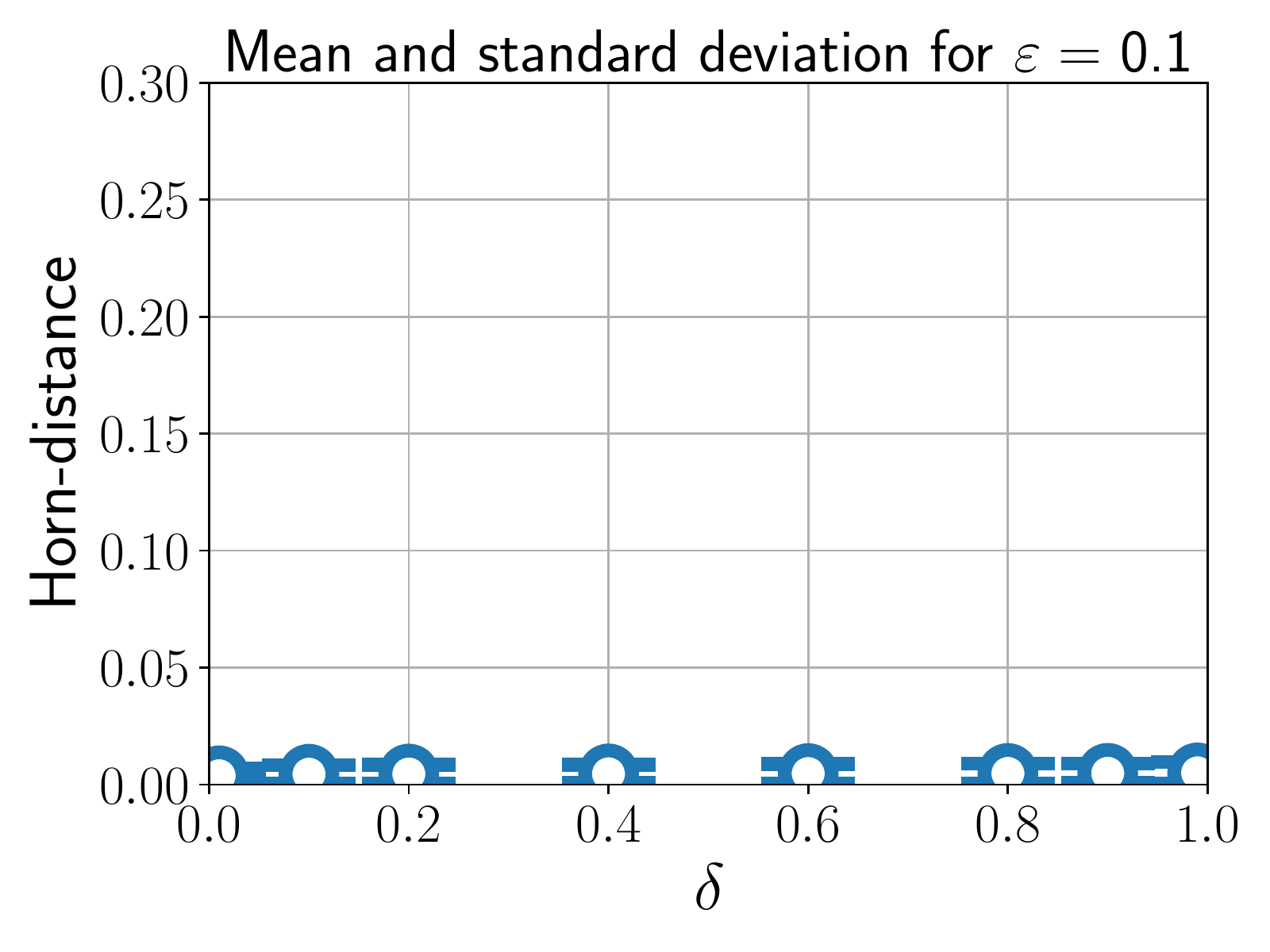}
  \includegraphics[width=0.32\textwidth]{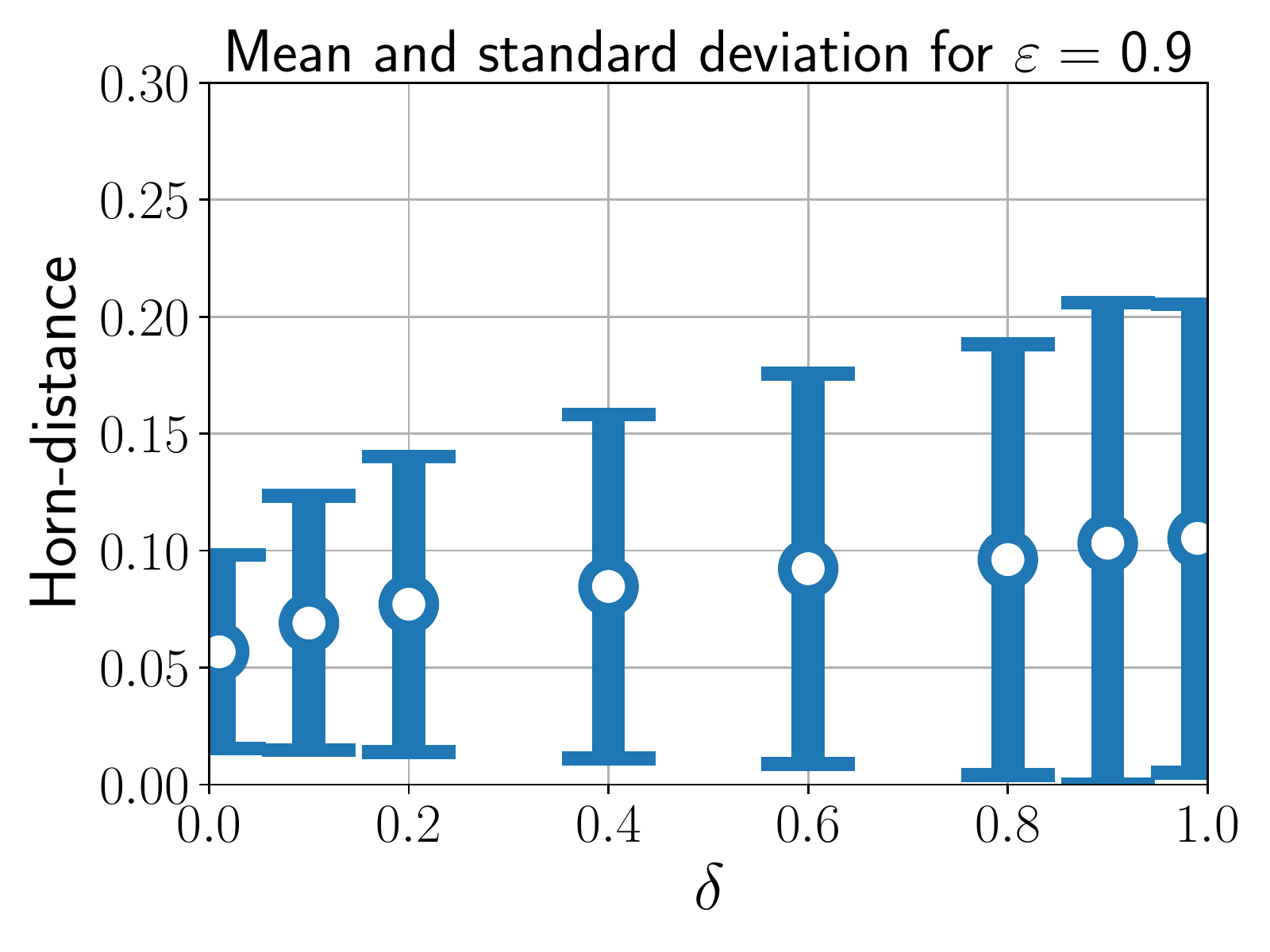}
  \caption{Horn-distances between the contexts from the BibSonomy data
    set and corresponding PAC~bases for fixed $\varepsilon$ and
    varying $\delta$.}
  \label{fig:bs1}
\end{figure}

\begin{figure}[t]
  \centering
  \includegraphics[width=0.32\textwidth]{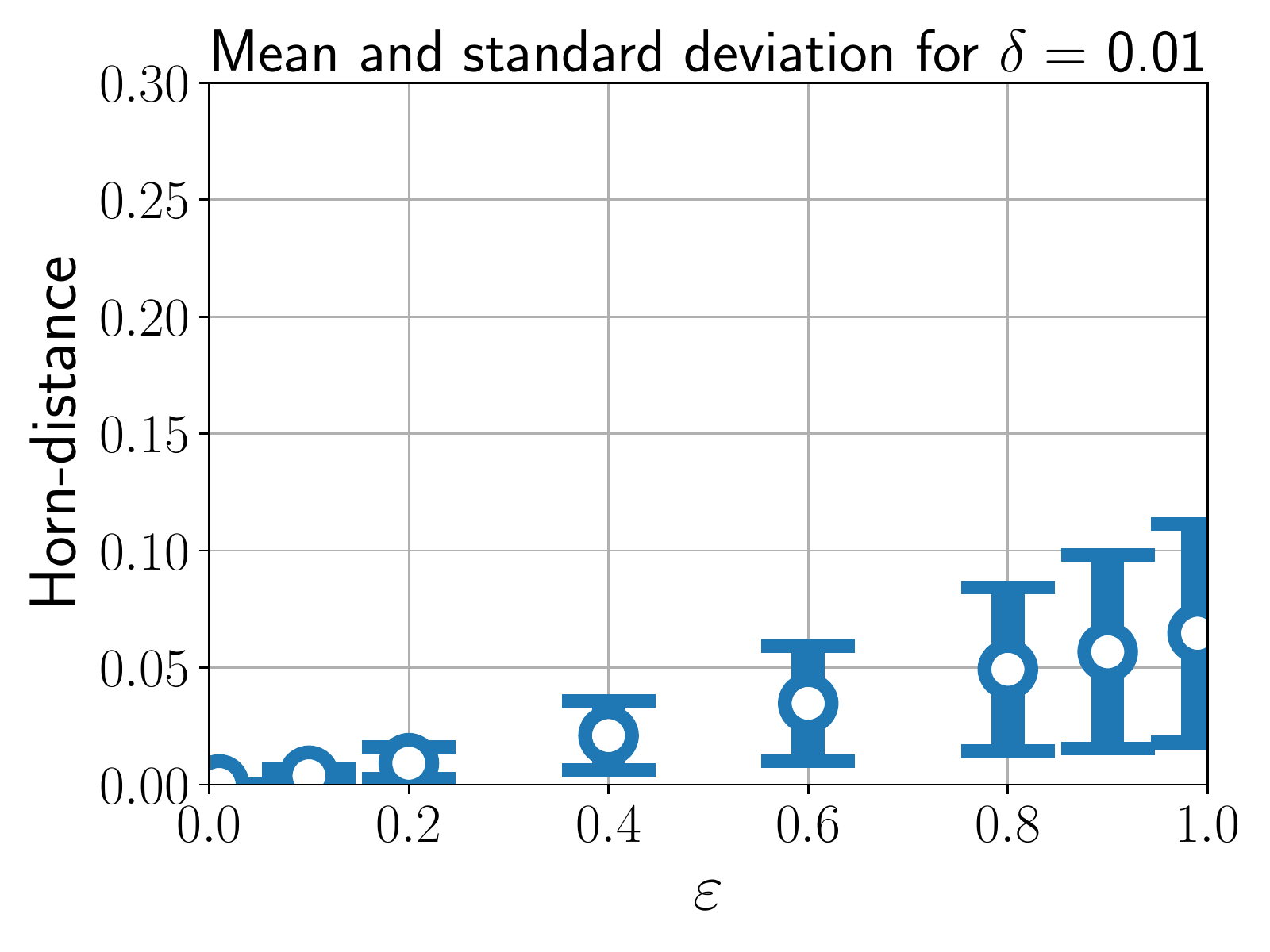}
  \includegraphics[width=0.32\textwidth]{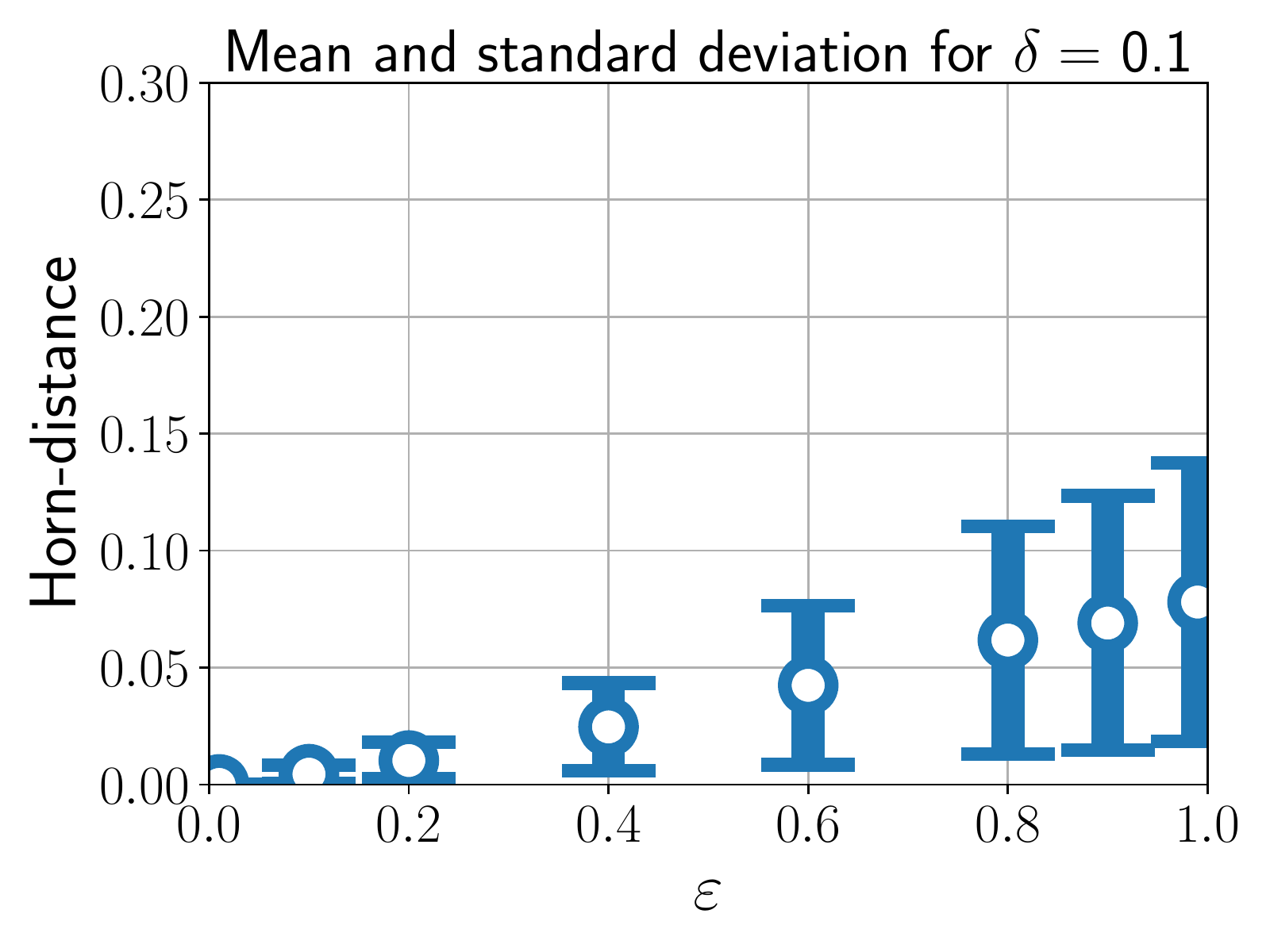}
  \includegraphics[width=0.32\textwidth]{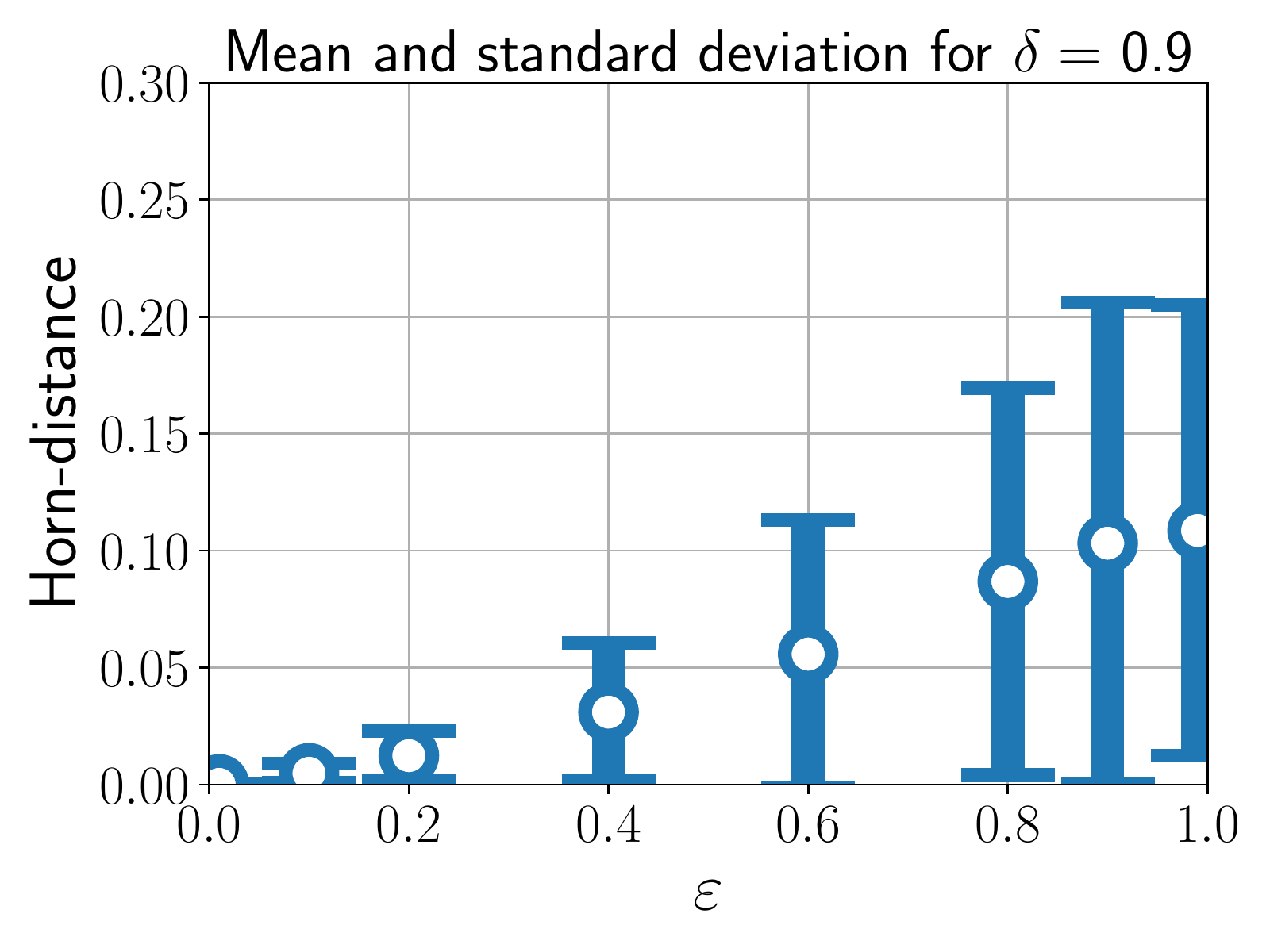}
  \caption{Horn-distance between the contexts from the BibSonomy data
    and corresponding PAC~bases for fixed $\delta$ and varying
    $\varepsilon$.}
  \label{fig:bs2}
\end{figure}

This data set consists of a collection of 2835 formal contexts, each
having exactly 12 attributes. It was created utilizing a data-dump
from the BibSonomy platform, and a detailed description of how this
had been done can be found in~\cite{conf/cla/BorchmannH16}. Those
contexts have a varying number of objects and their canonical bases
have sizes between one and 189.

Let us first fix the accuracy $\varepsilon$ and vary the confidence
$\delta$ in order to investigate the influence of the latter.  The
mean value and the standard deviation over all 2835 formal contexts of
the Horn-distance between the canonical basis and a PAC~basis is shown
in~\cref{fig:bs1}.  A first observation is that for all chosen values
of $\varepsilon$, an increase of $1-\delta$ only yields a small change
of the mean value, in most cases an increase as well. The standard
deviation is, in almost all cases, also increasing. The results for
the macro average of precision and recall are shown
in~\cref{fig:bsprec}.  Again, only a small impact on the final outcome
when varying $1-\delta$ could be observed. We therefore omitted to
show these in favor of the following plots.

Dually, let us now fix the confidence $\delta$ and vary the accuracy
$\varepsilon$.  The Horn-distances between the canonical basis and a
computed PAC~basis for this experiment are shown in~\cref{fig:bs2}.
From this we can learn numerous things.  First, we see that increasing
$\varepsilon$ always leads to a considerable increase in the
Horn-distance, signaling that the PAC basis deviates more and more
from the canonical basis.  However, it is important to note that the
mean values are always below $\varepsilon$, most times even
significantly.  Also, the increase for the Horn-distance while
increasing $\varepsilon$ is significantly smaller than one.  That is
to say, the required accuracy bound is never realized, and especially
for larger values of $\varepsilon$ the deviation of the computed
PAC~basis from the exact implicational theory is less than the
algorithm would allow to.  We observe a similar behavior for precision
and recall.  For small values of $\varepsilon$, both precision and
recall are very high, i.e., close to one, and subsequently seem to
follow an exponential decay.

\begin{figure}[t]
  \centering
  \includegraphics[width=0.32\textwidth]{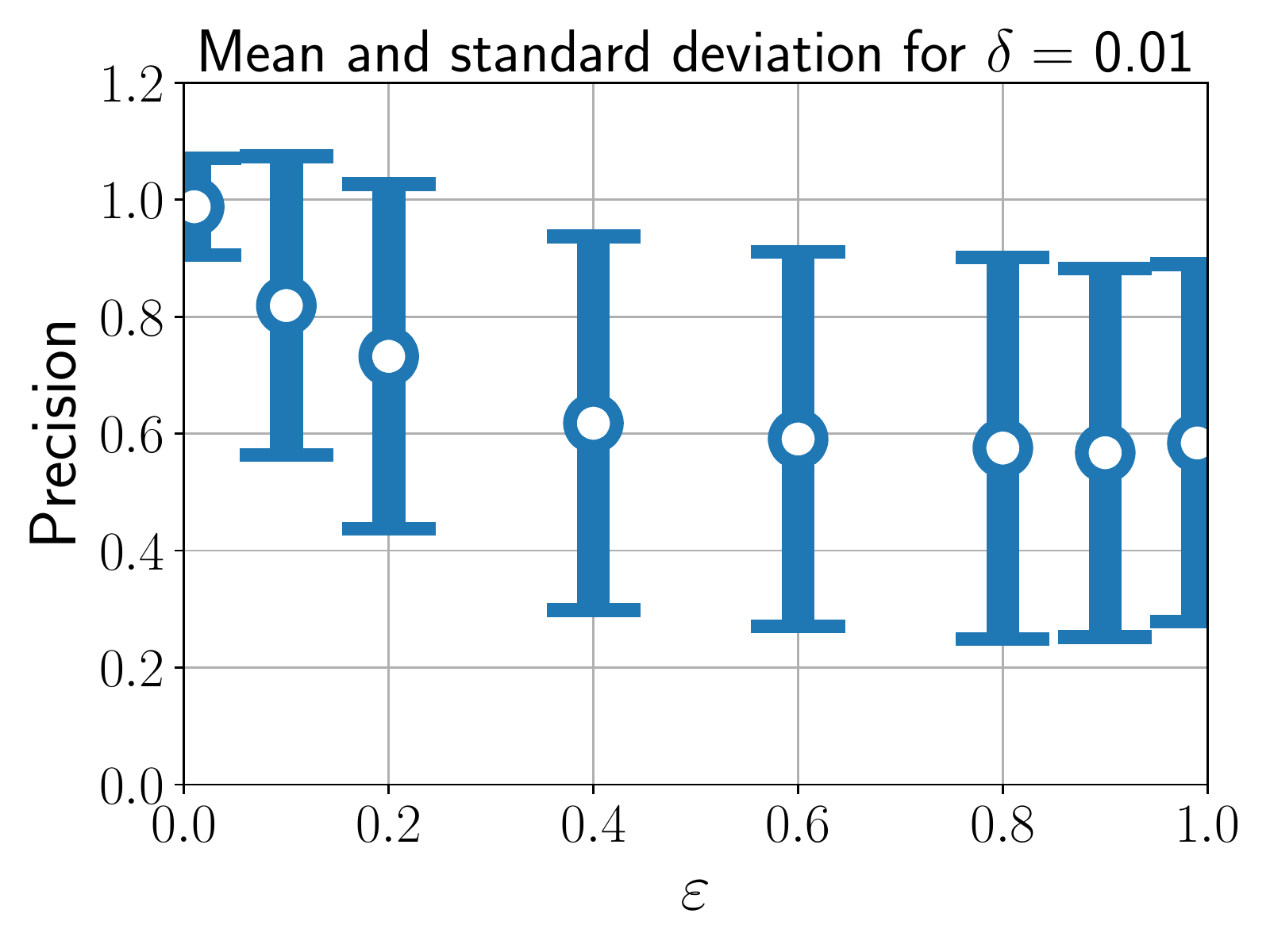}
  \includegraphics[width=0.32\textwidth]{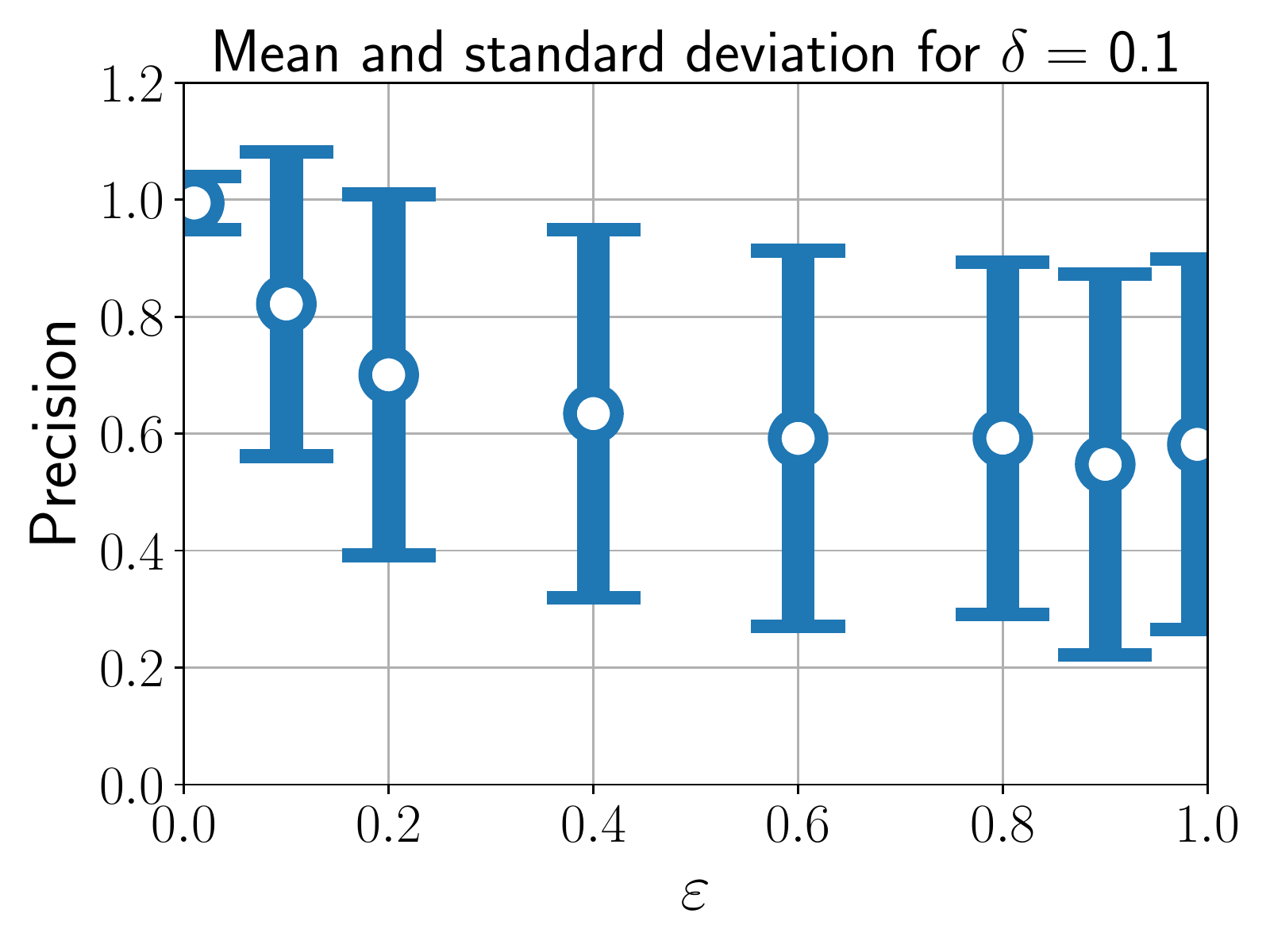}
  \includegraphics[width=0.32\textwidth]{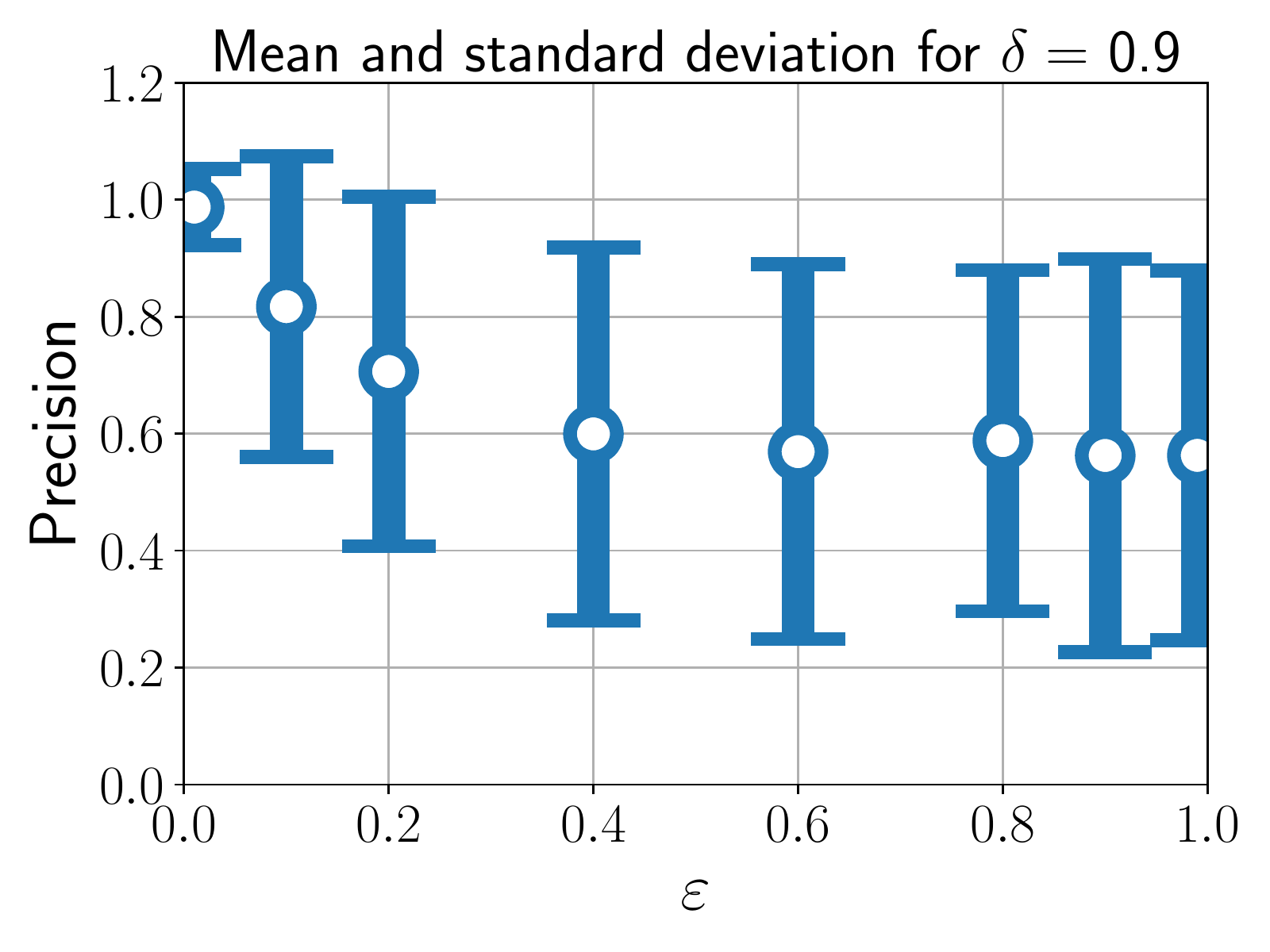}\\
  \includegraphics[width=0.32\textwidth]{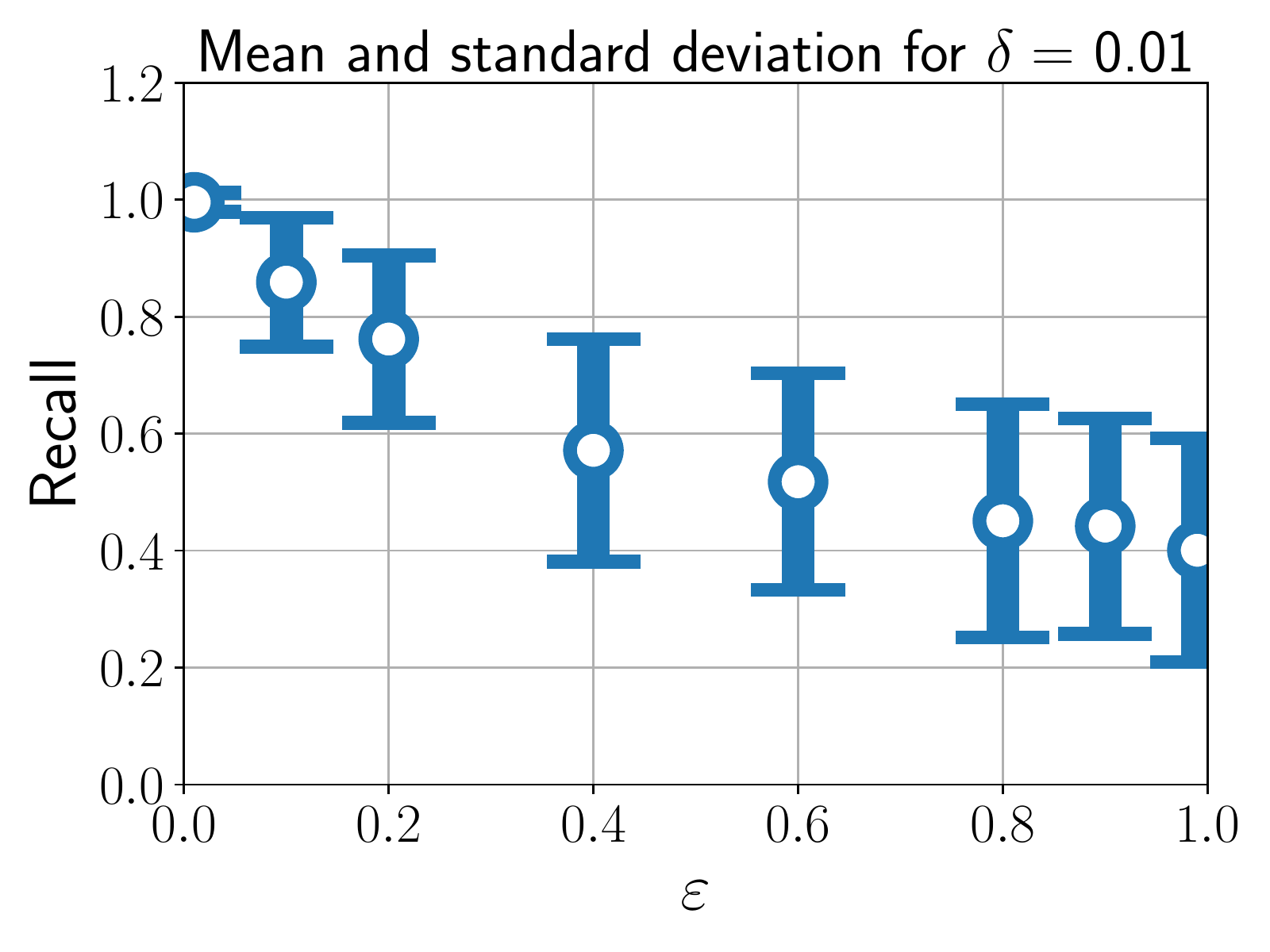}
  \includegraphics[width=0.32\textwidth]{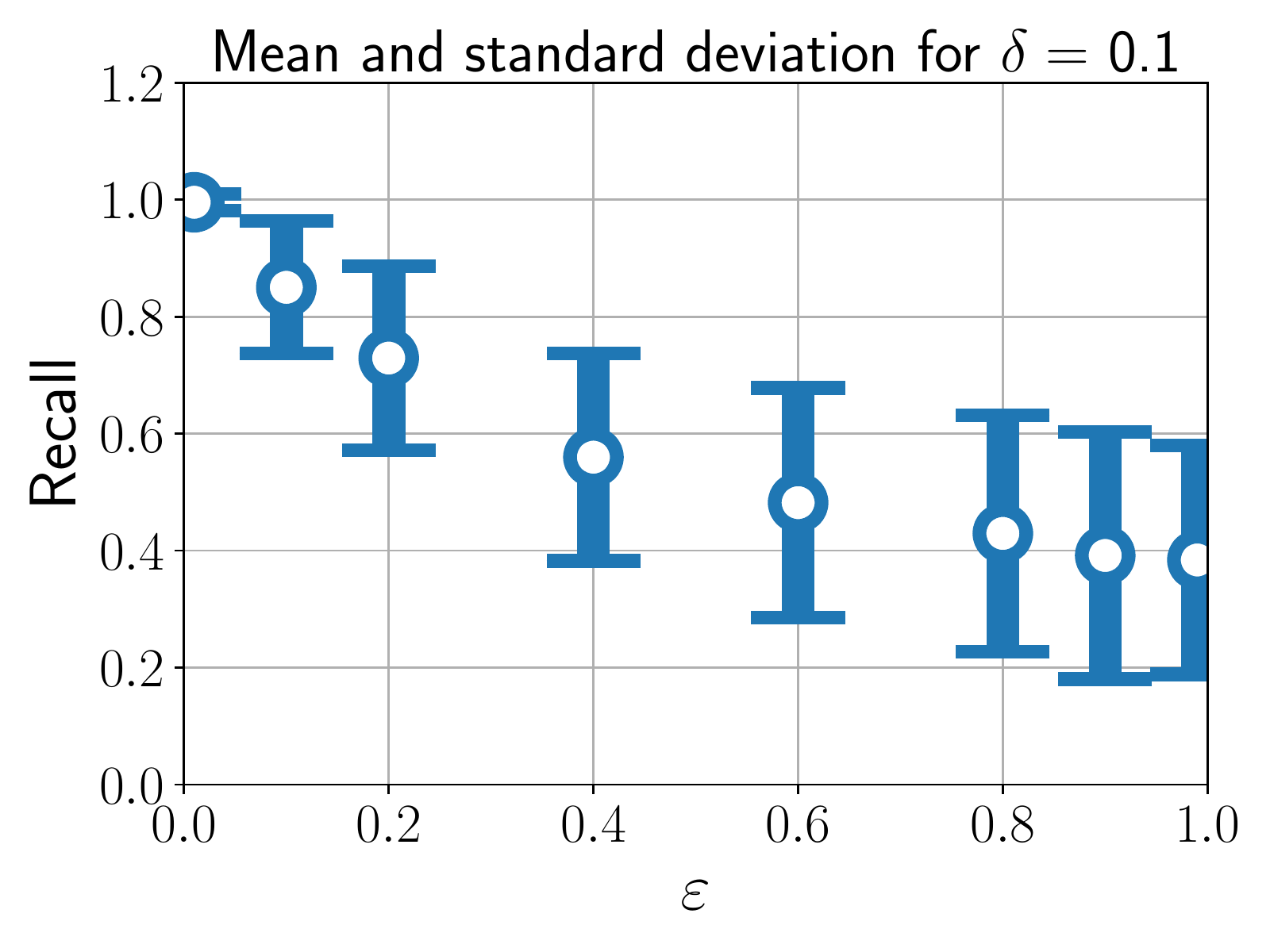}
  \includegraphics[width=0.32\textwidth]{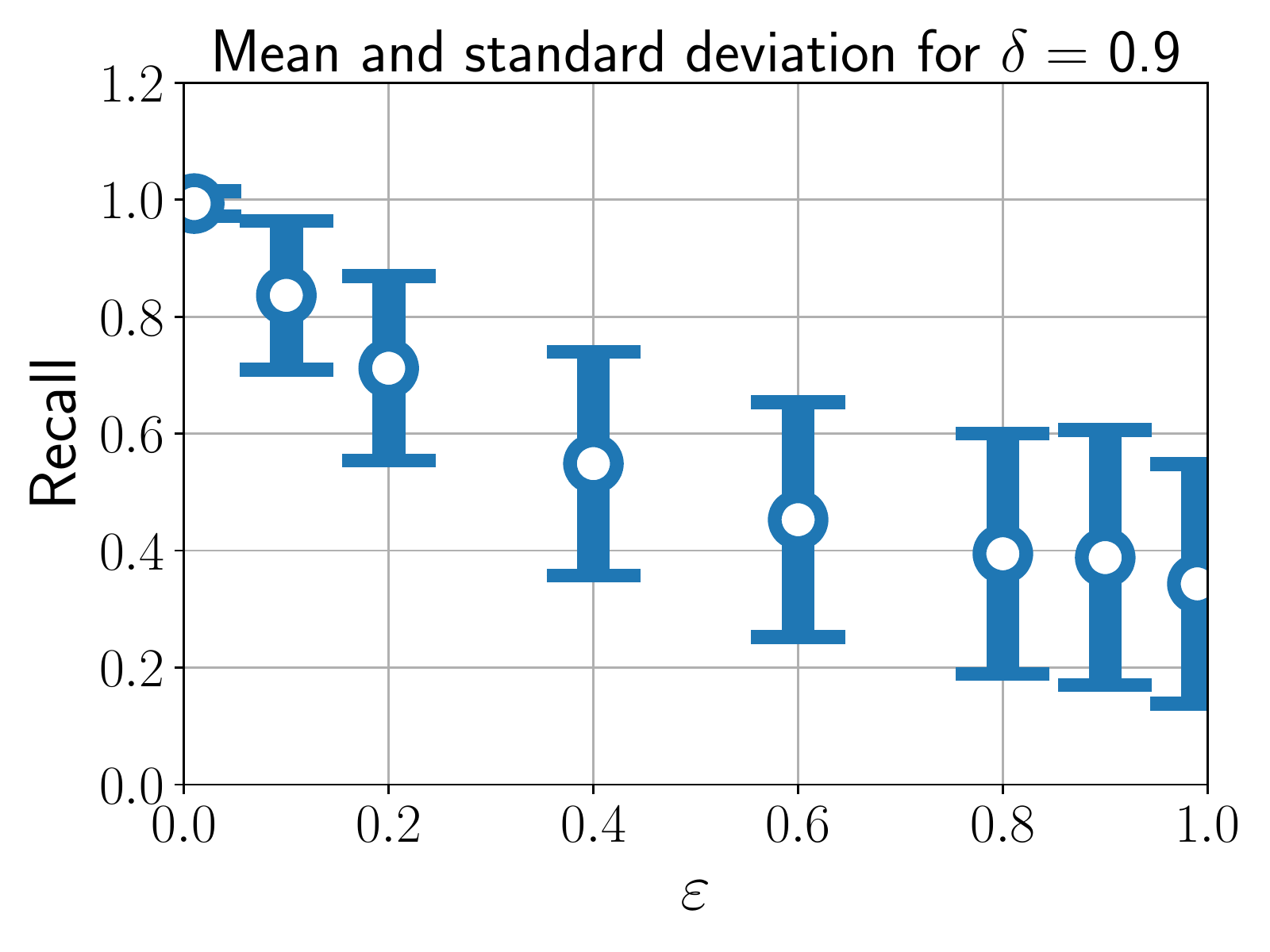}
  \caption{Measured precision (above) and recall (below) for fixed
    $\delta$ and varying $\varepsilon$ for the BibSonomy data set.}
  \label{fig:bsprec}
\end{figure}

% \begin{figure}[t]
%   \centering
%   \includegraphics[width=0.32\textwidth]{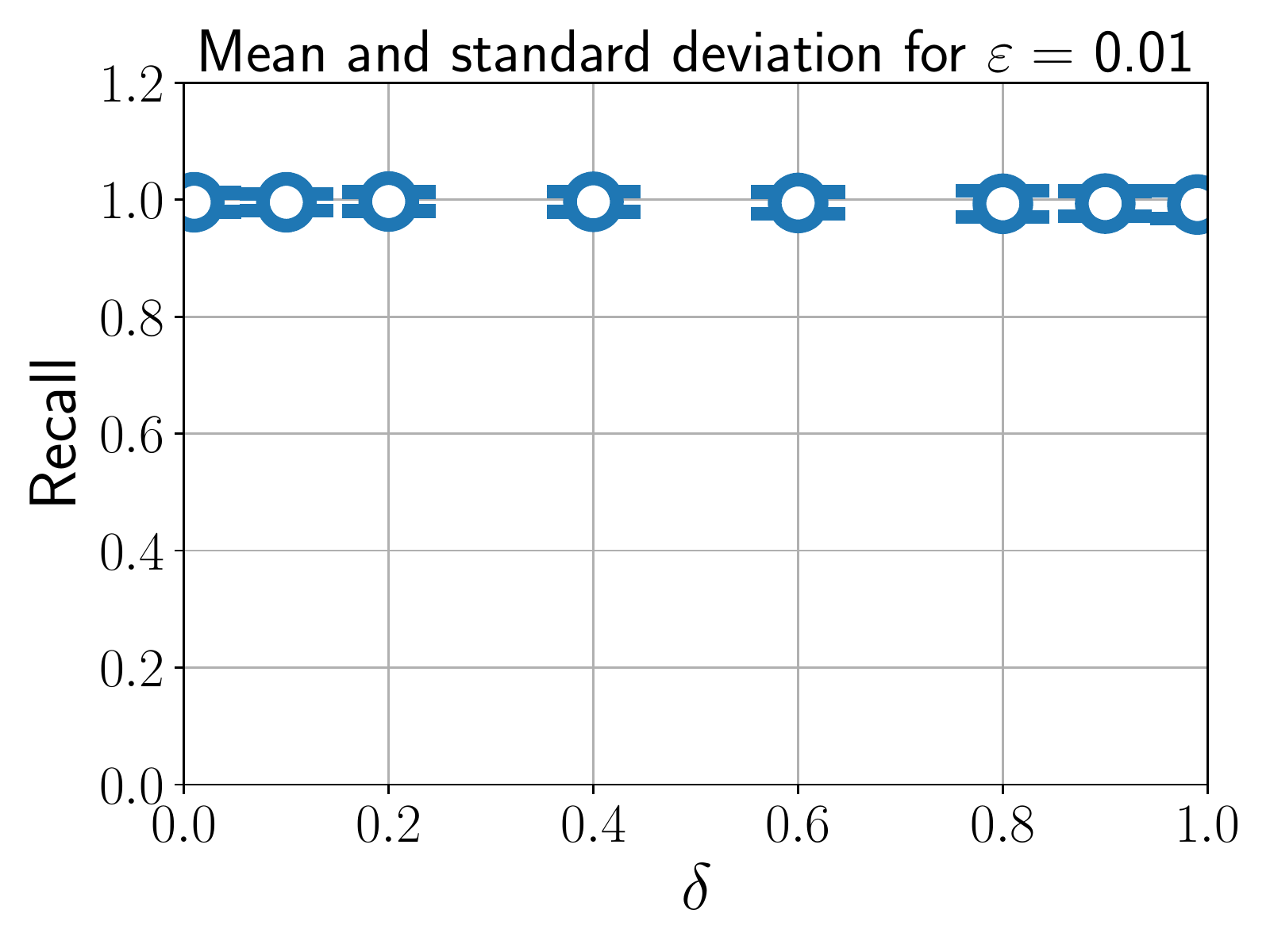}
%   \includegraphics[width=0.32\textwidth]{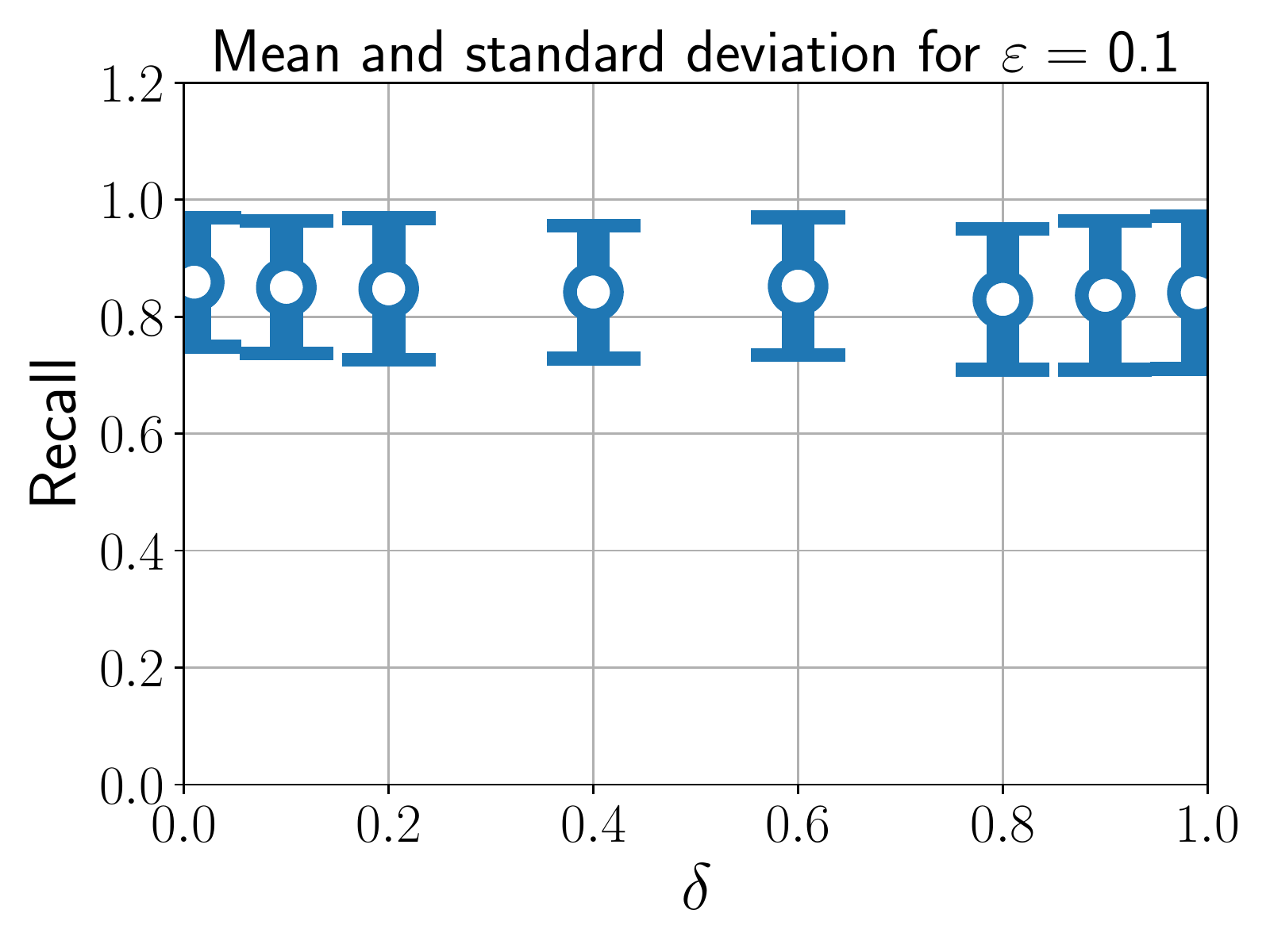}
%   \includegraphics[width=0.32\textwidth]{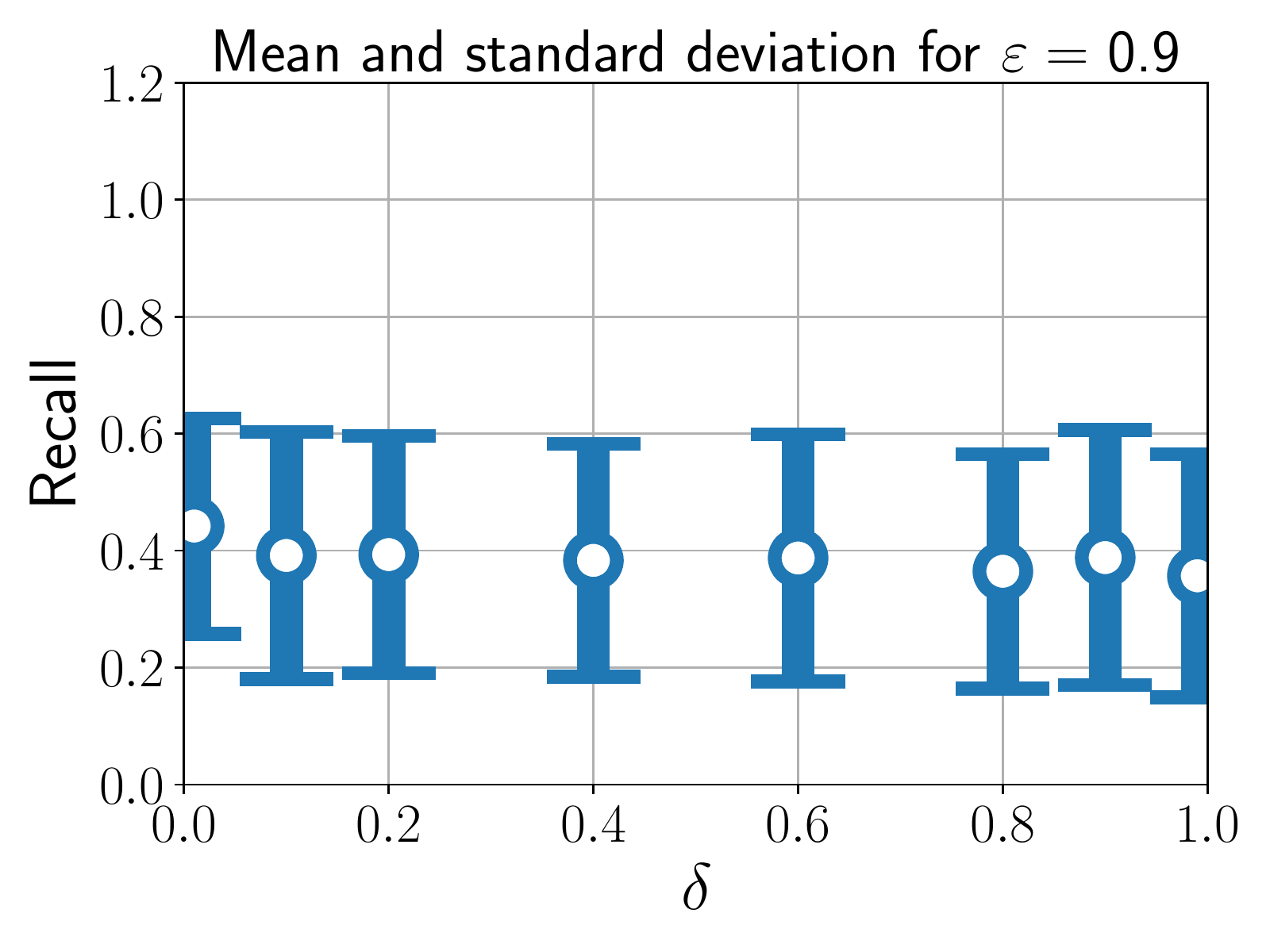}\\
%   \includegraphics[width=0.32\textwidth]{{visualize/Recall_plot_var_0.01}.pdf}
%   \includegraphics[width=0.32\textwidth]{{visualize/Recall_plot_var_0.1}.pdf}
%   \includegraphics[width=0.32\textwidth]{{visualize/Recall_plot_var_0.9}.pdf}
%   \caption{Measured recall for fixed $\varepsilon$ and varying
%     $\delta$ (above) and fixed $\delta$ and varying $\varepsilon$ (below).}
%   \label{fig:bsreca}
% \end{figure}

\subsubsection{Artificial contexts}
\label{sec:random-contexts}

\begin{figure}[t]
  \centering
  \includegraphics[width=0.32\textwidth]{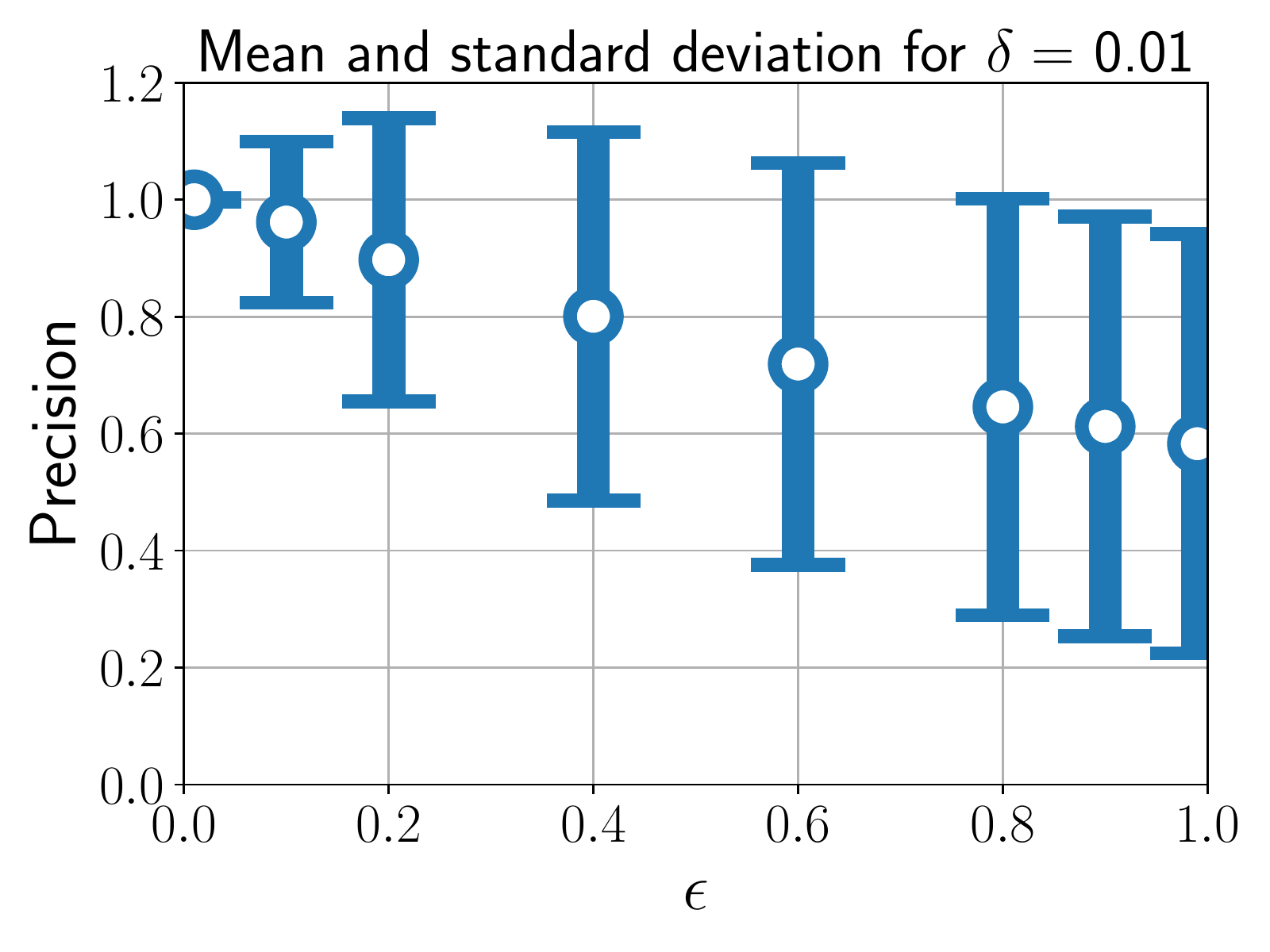}
  \includegraphics[width=0.32\textwidth]{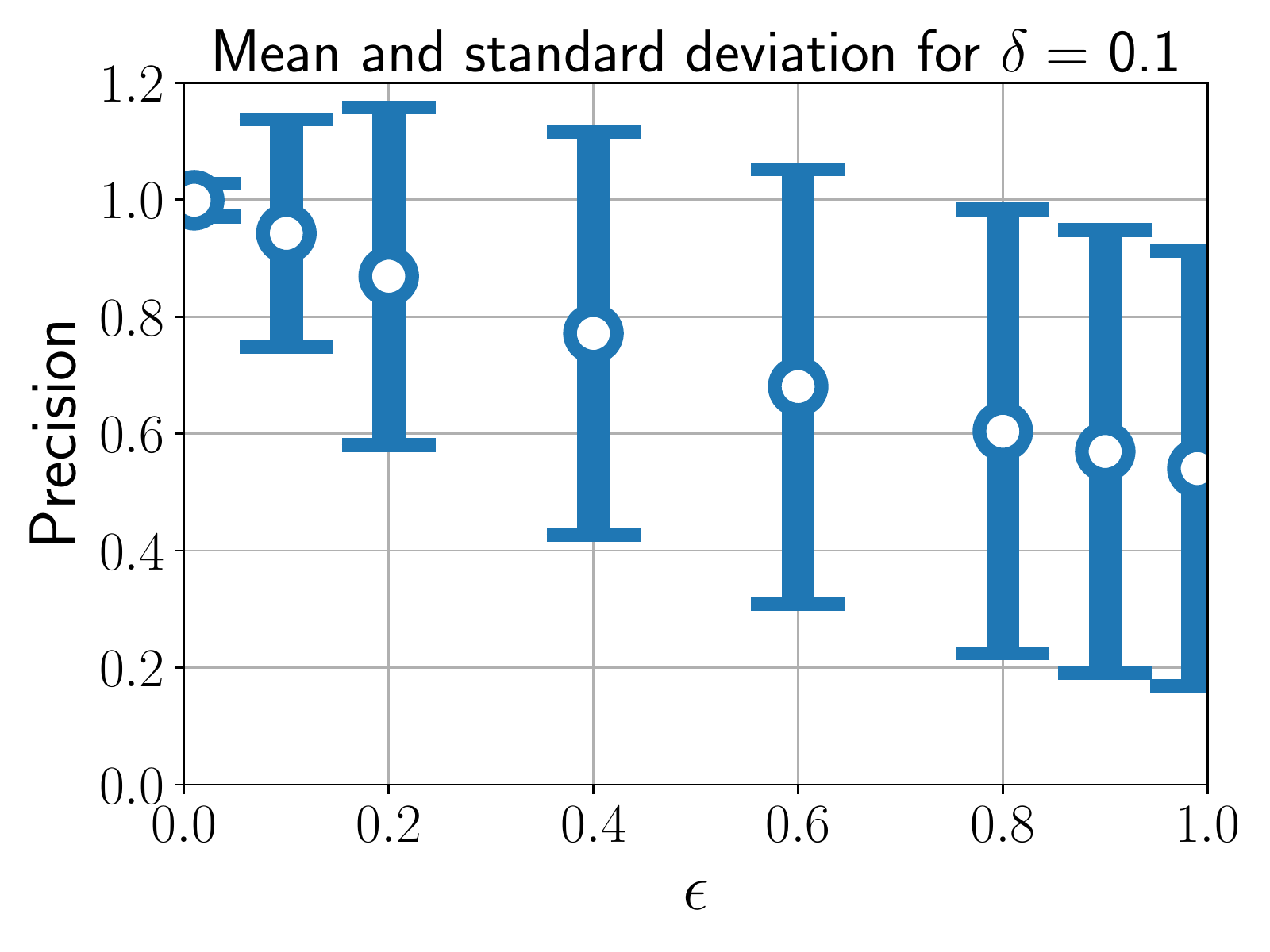}
  \includegraphics[width=0.32\textwidth]{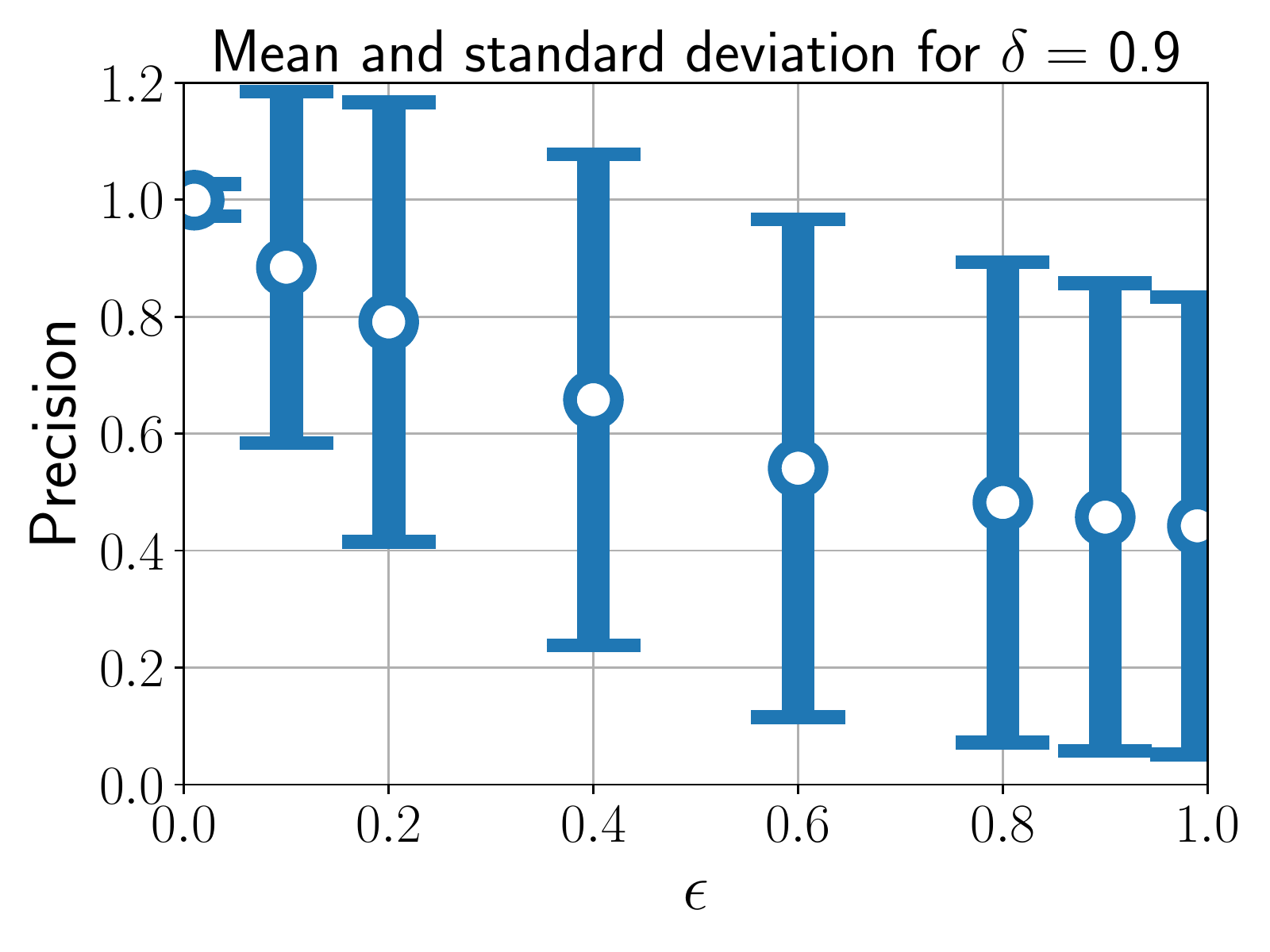}\\
  \includegraphics[width=0.32\textwidth]{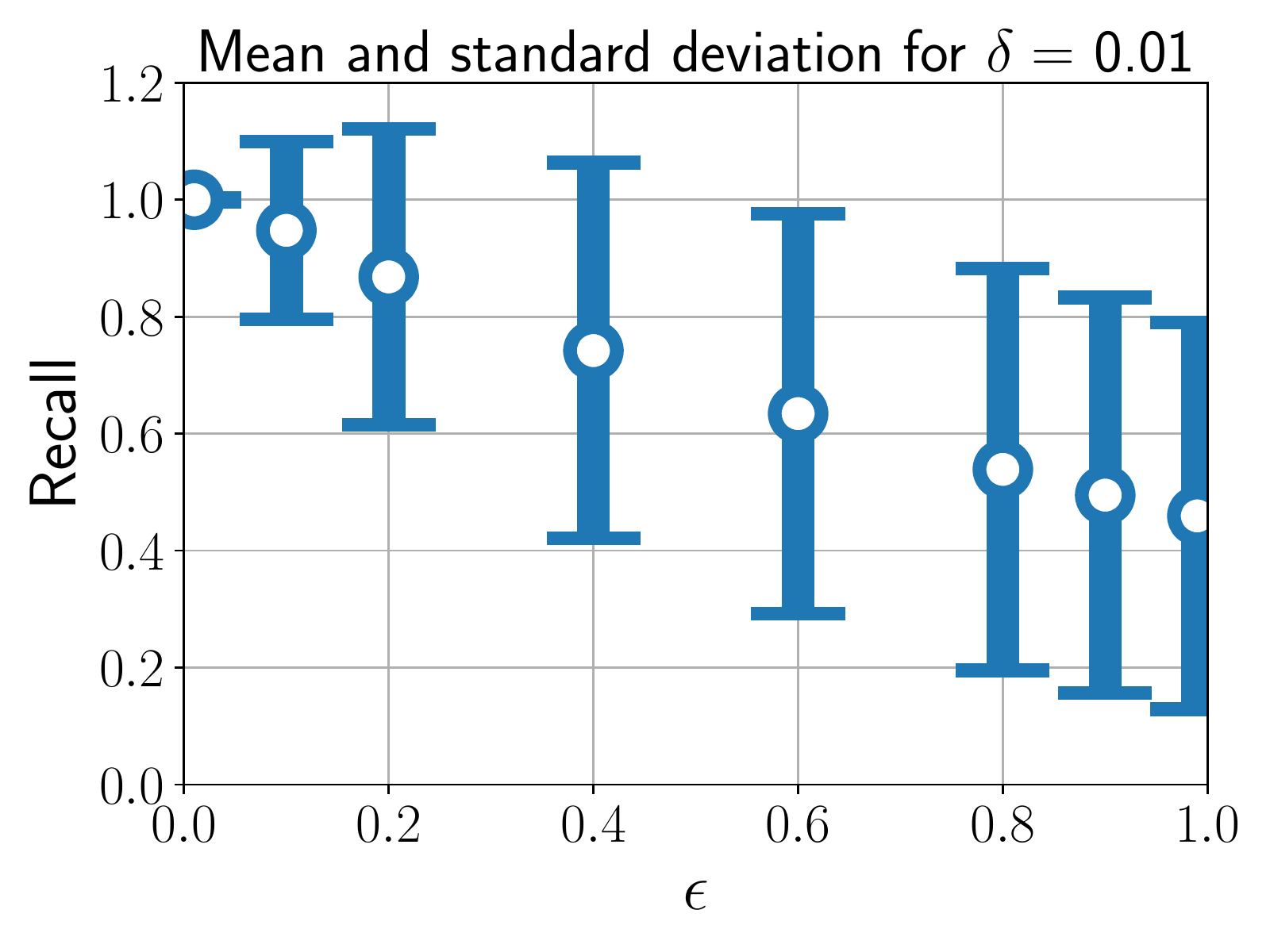}
  \includegraphics[width=0.32\textwidth]{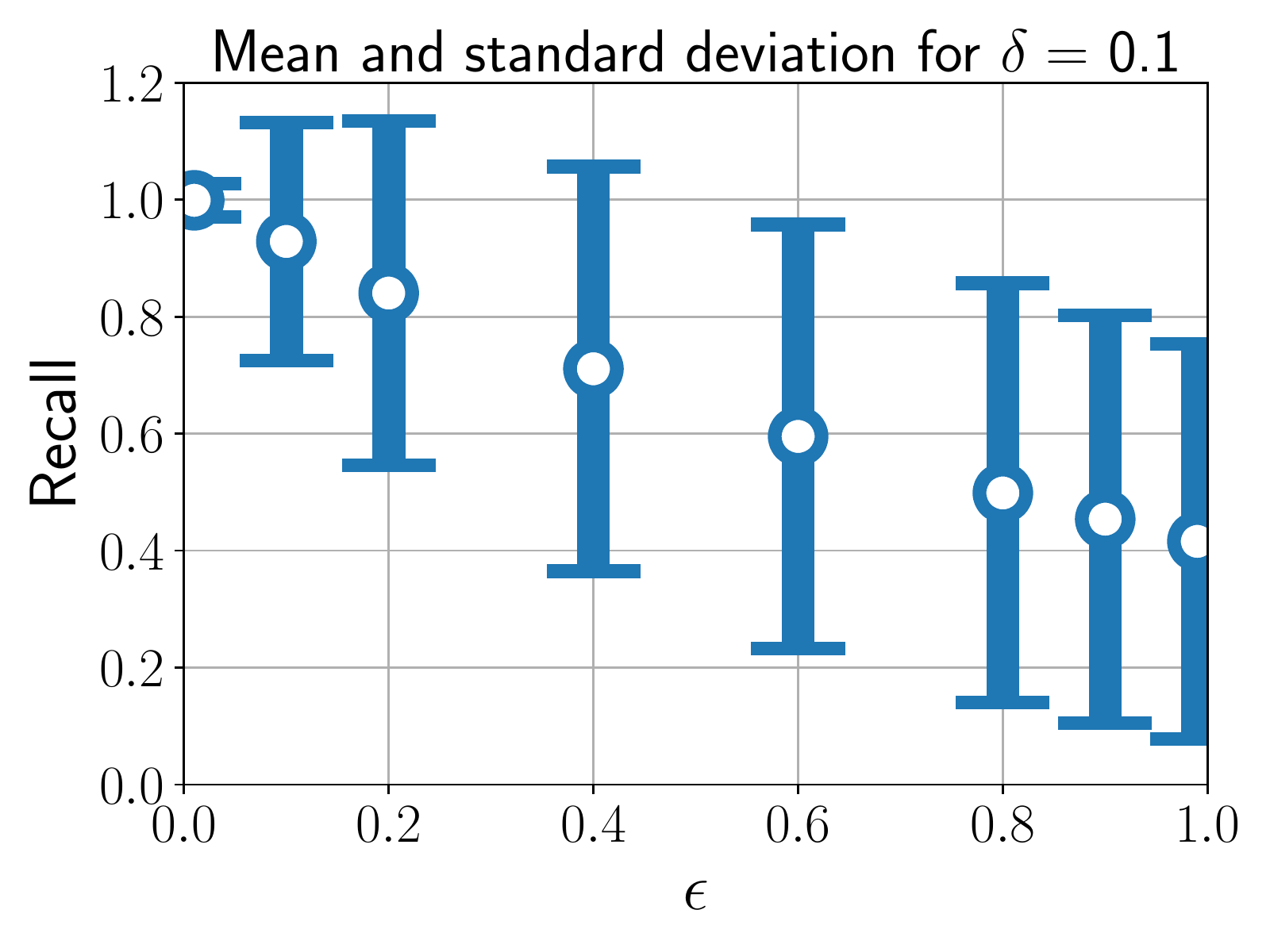}
  \includegraphics[width=0.32\textwidth]{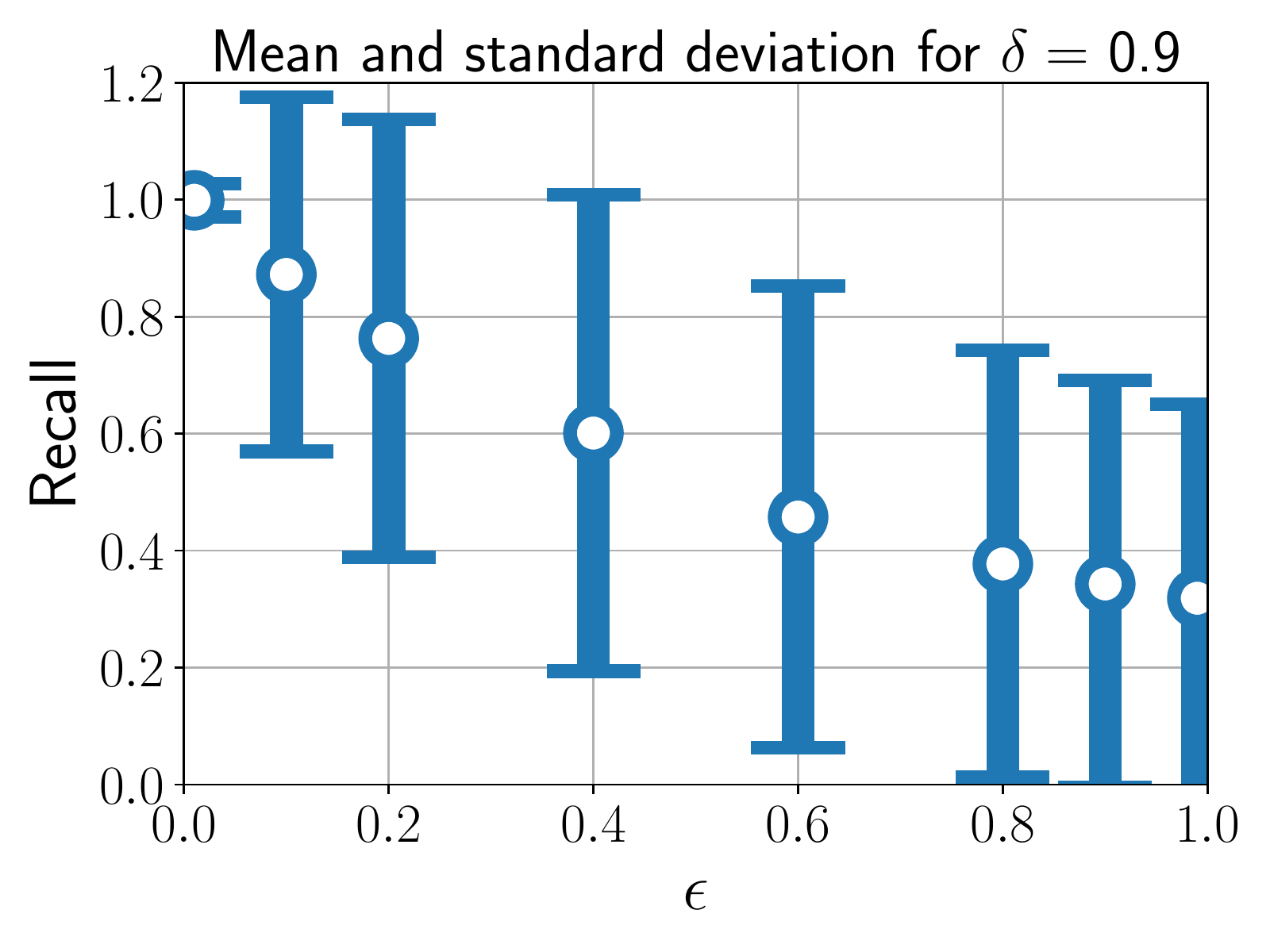}
  \caption{Measured recall for fixed $\varepsilon$ and varying
    $\delta$ (above) and fixed $\delta$ and varying $\varepsilon$
    (below) for 3939 randomly generated formal contexts with ten attributes.}
  \label{fig:random}
\end{figure}

We now want to discuss the results of a computation analogous to the
previous one, but with artificially generated formal contexts. For
these formal contexts, the size of the attribute set is fixed at ten,
and the number of objects and the density are chosen uniformly at
random.  The original data set consists of 4500 formal contexts, but
we omit all that have a canonical basis with fewer than ten
implications, to eliminate the high impact a single false implication
in bases of small cardinality would have.

A selection of the experimental results is shown
in~\cref{fig:random}. We limit the presentation to precision and
recall only, since the previous experiments indicate that
investigating Horn-distance does not yield any new insights.  For
$\varepsilon=0.01$ and $\delta-1 = 0.01$, the precision as well as the
recall is almost exactly one (0.999), with a standard deviation of
almost zero (0.003).  When increasing $\varepsilon$, the mean values
deteriorate analogously to the previous experiment, but the
standard deviation increases significantly more.

\subsubsection{Stability}
\label{sec:stability}
\begin{figure}[t]
  \centering
  \includegraphics[width=0.32\textwidth]{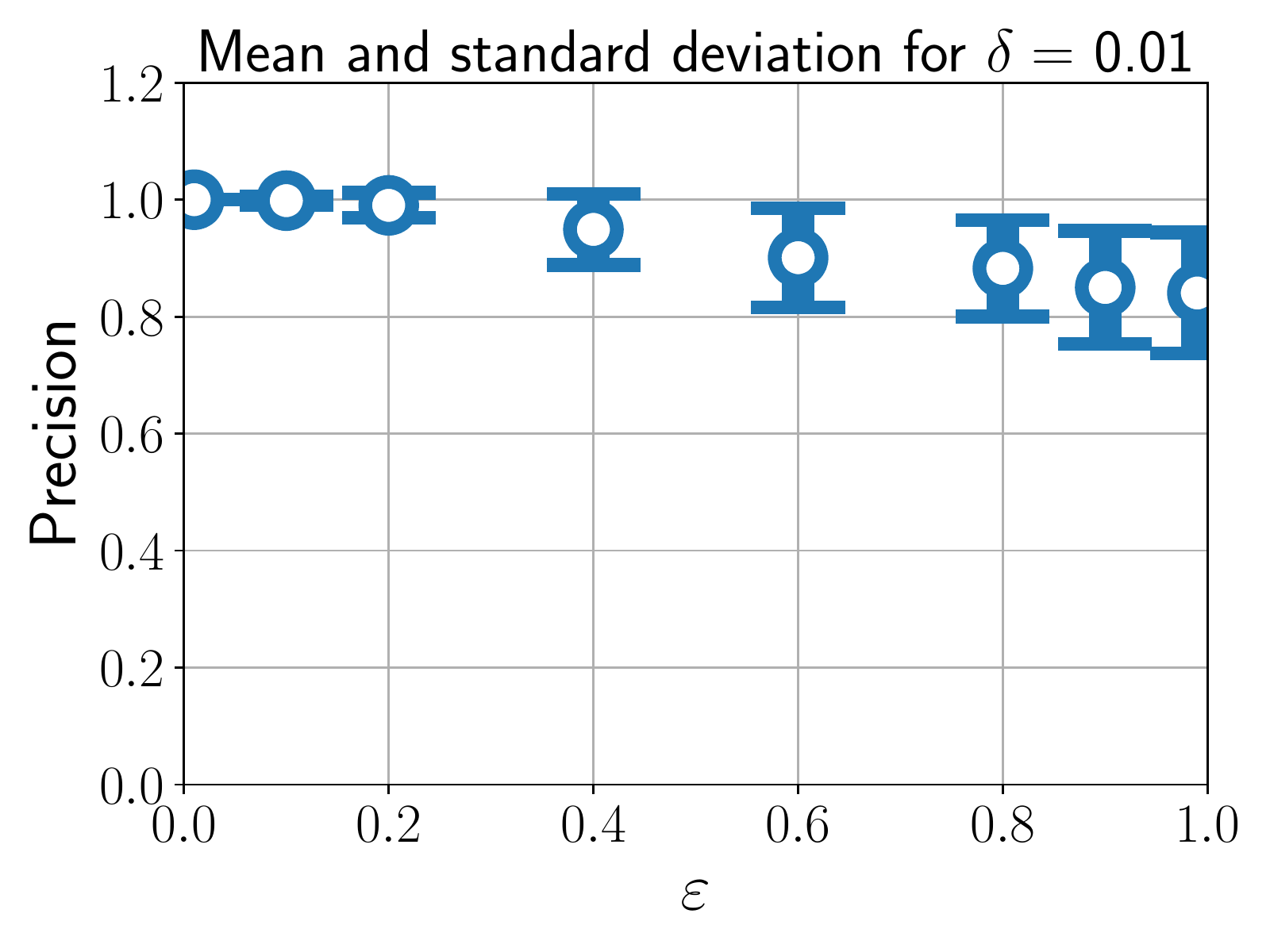}
  \includegraphics[width=0.32\textwidth]{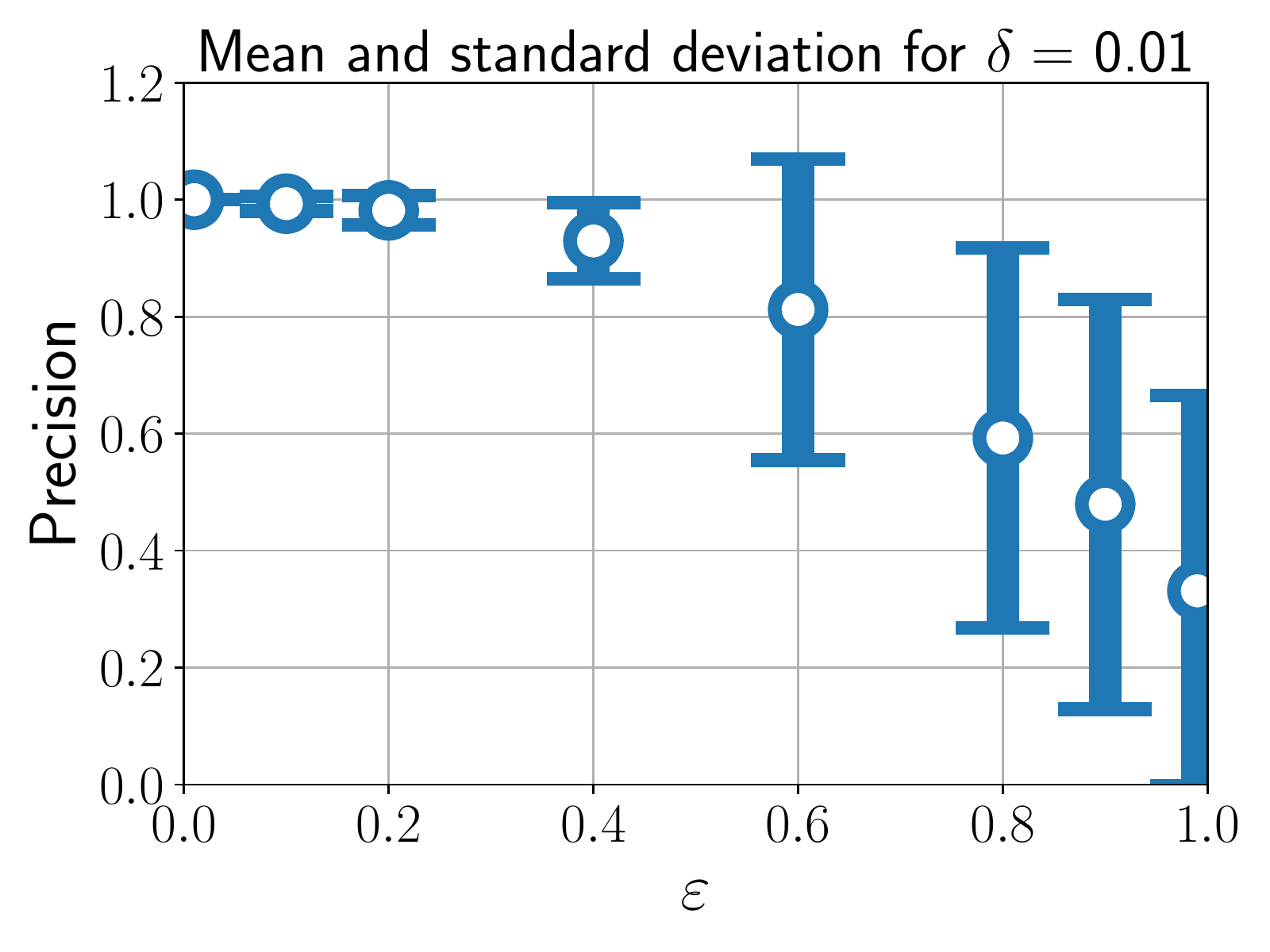}
  \includegraphics[width=0.32\textwidth]{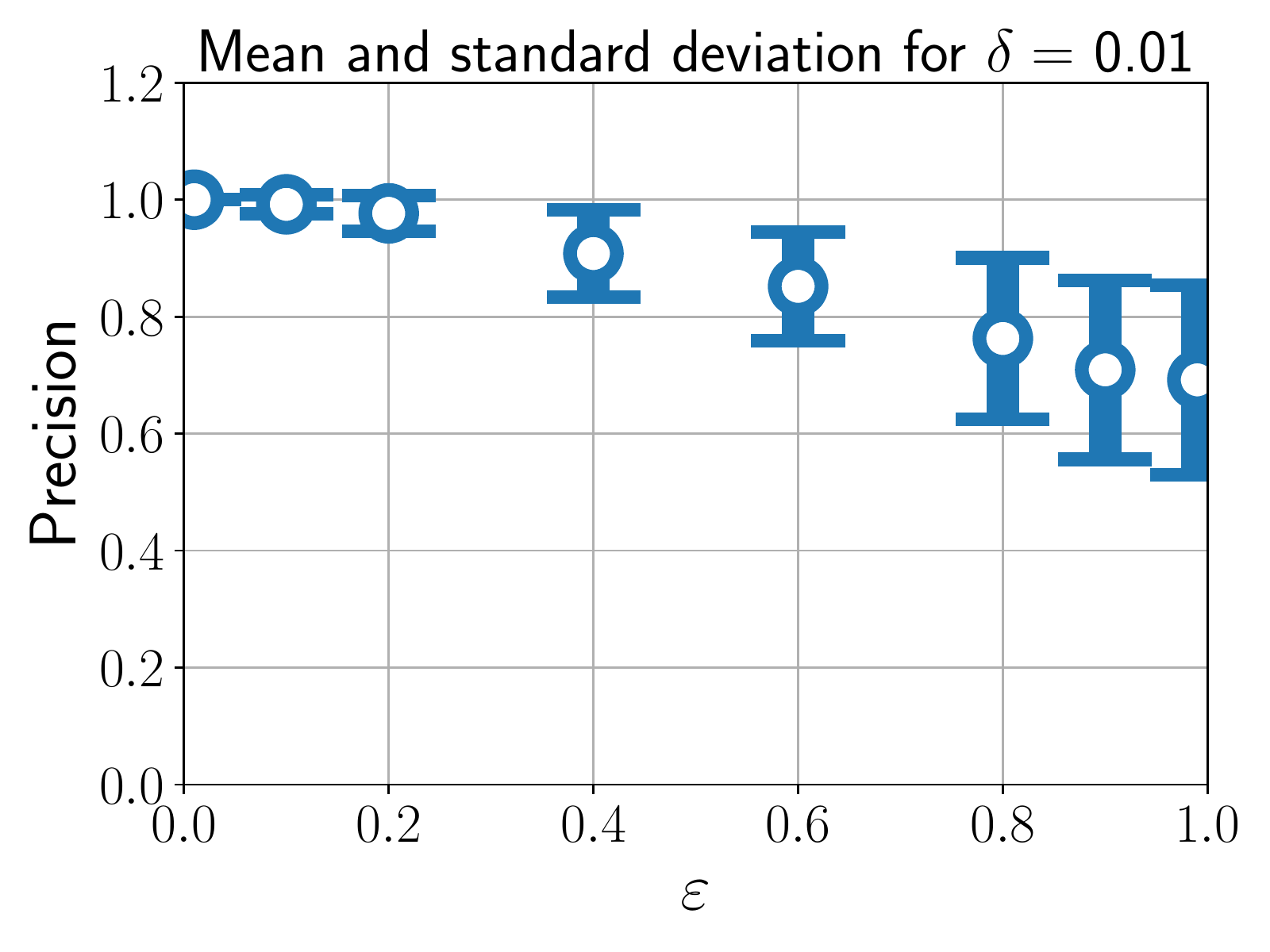}\\
  \includegraphics[width=0.32\textwidth]{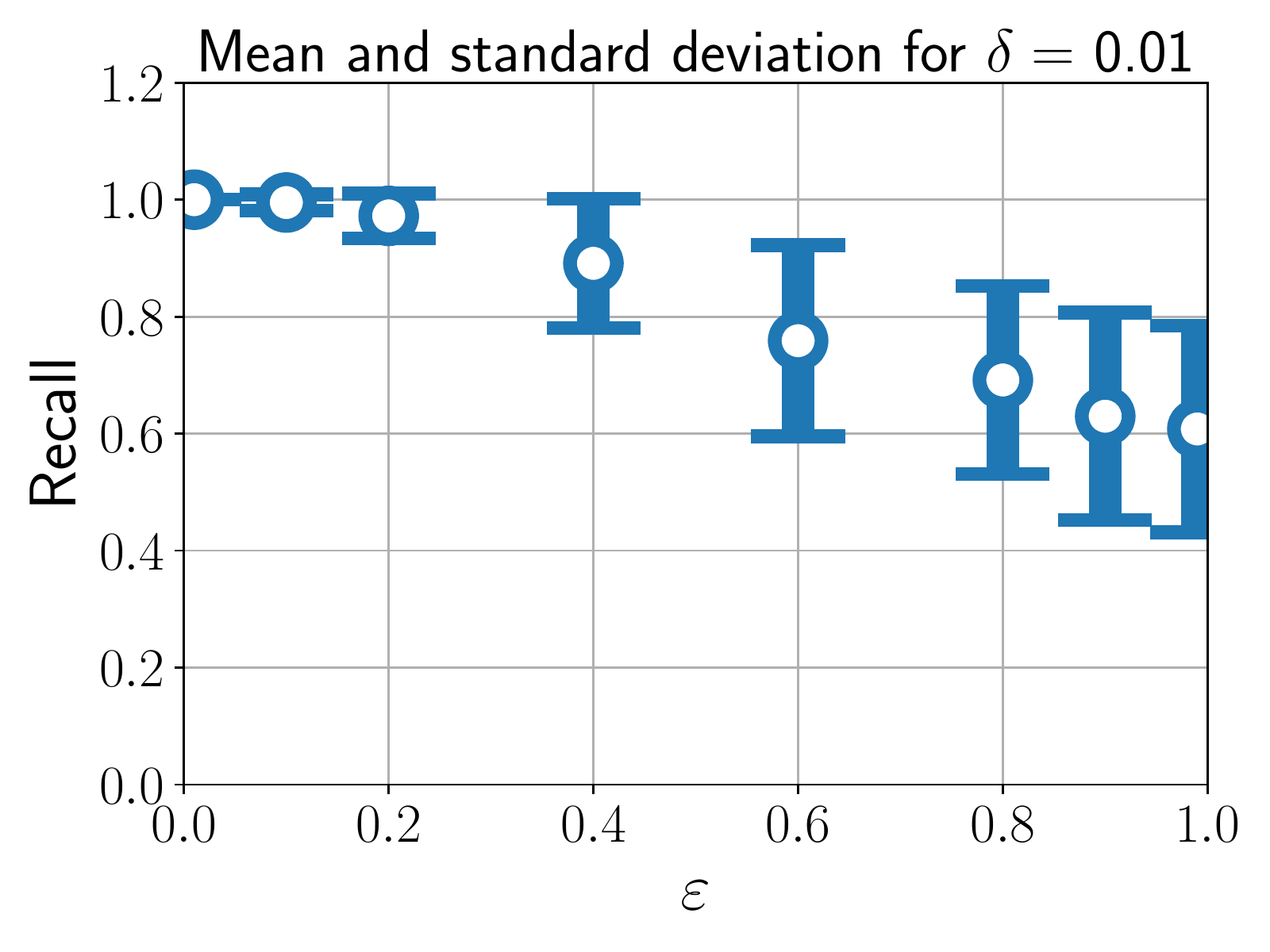}
  \includegraphics[width=0.32\textwidth]{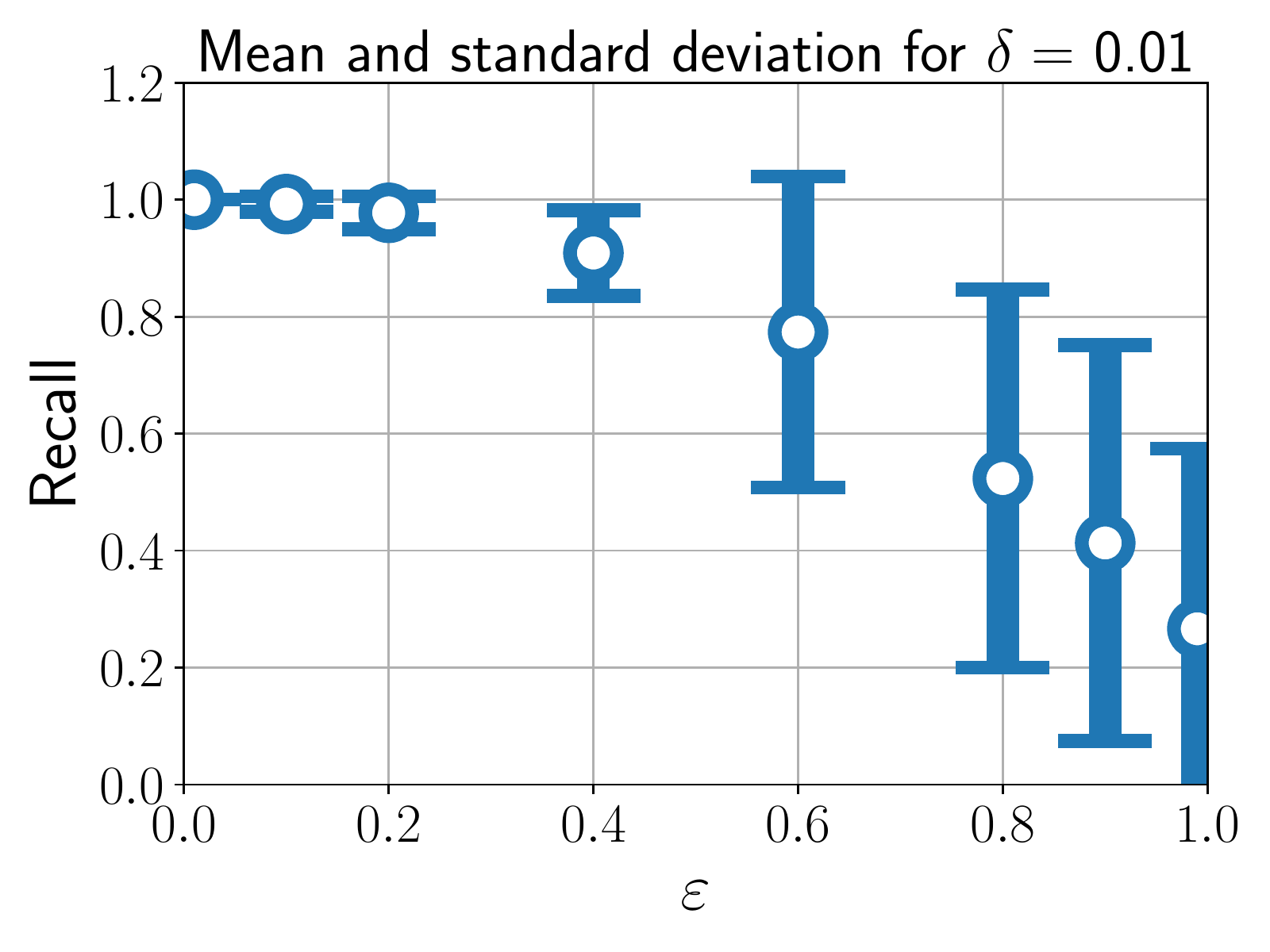}
  \includegraphics[width=0.32\textwidth]{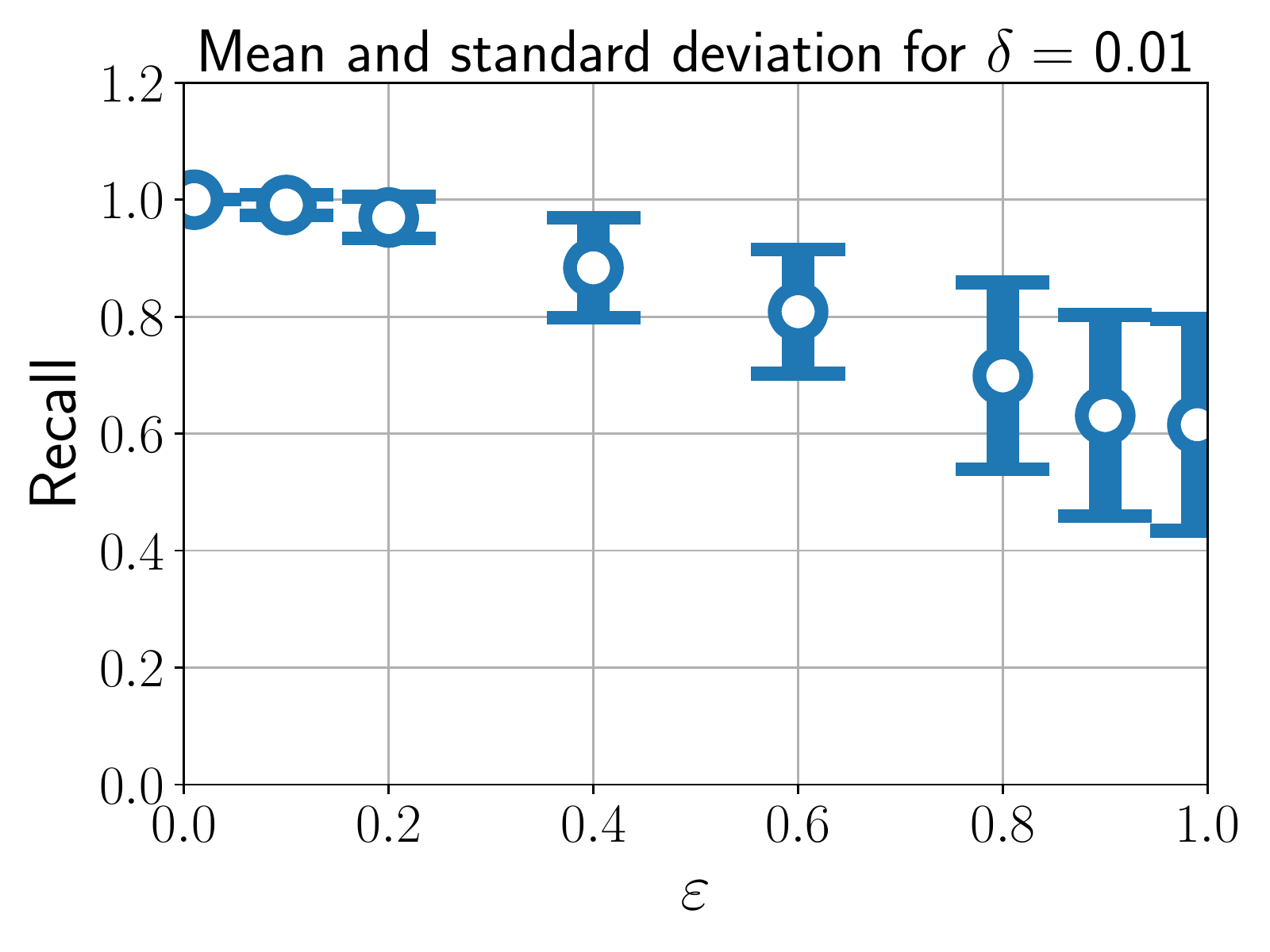}
  \caption{For fixed $\delta$ and varying $\varepsilon$, measured
    precision (above) and recall (below) stability for 100 runs on the
    same formal context with eight (left), nine (middle), and ten
    (right) attributes.}
  \label{fig:stability}
\end{figure}

In our final experiment, we want to consider the impact of the
randomness of the \lstinline{pac-basis} algorithm when computing bases
of fixed formal contexts.  To this end, we shall consider particular
formal contexts $\mathbb K$ and repeatedly compute probably
approximately correct implication bases of $\mathbb K$.  For these
bases, we again compute recall and precision as we did in the previous
experiments.

We shall consider three different artificial formal contexts with
eight, nine, and ten attributes, and canonical bases of size 31, 40,
and 70, respectively. In~\cref{fig:stability}, we show the precision
and recall values for these contexts when calculating PAC~bases
100~times. In general, the standard-deviation of precision and recall
for small values of $\varepsilon$ are low.  Increasing this parameter
leads to an exponential decay of precision and recall, as expected,
and the standard-deviation increases as well.  We expect that both the
decay of the mean value as well as the increase in standard deviation
are less distinct for formal contexts with large canonical bases.

\subsubsection{Discussion}
\label{sec:discussion}

Altogether the experiments show promising results. However, there are
some peculiarities to be discussed. The impact of $1-\delta$ for
Horn-distance in the case of the BibSonomy data set was considerably
low. At this point, it is not clear whether this is due to the nature
of the chosen contexts or to the fact that the algorithm is less
constrained by $\delta$. The results presented in~\cref{fig:bsprec}
show that neither precision nor recall are impacted by varying
$1-\delta$ as well.  All in all, for the formal contexts of the
BibSonomy data set, the algorithm delivered solid results in terms of
accuracy and confidence, in particular when considering precision and
recall, see~\cref{fig:bsprec}. Both measures indicate that the
PAC~bases perform astonishingly well, even for high values of
$\varepsilon$.

For the experiment of the artificial contexts, the standard deviation
increases significantly more than in the BibSonomy experiment. The
source for this could not be determined in this work and needs further
investigation. The overall inferior results for the artificial
contexts, in comparison to the results for the BibSonomy data set, may
be credited to the fact that many of the artificial contexts had a
small canonical basis between 10 and 30. For those, a small amount of
false or missing implications had a great impact on precision and
recall. Nevertheless, the promising results for small values of
$\varepsilon$ back the usability of the PAC basis generating
algorithm.

\subsection{A Small Case-Study}
\label{sec:how-much-different}

Let us consider a classical example, namely the \emph{Star-Alliance}
context~\cite{gerd_airlines}, consisting of the members of the Star
Alliance airline alliance prior to 2002, together with the regions of
the world they fly to.  The formal context $\mathbb{K}_{\mathsf{SA}}$
is given in Figure~\ref{fig:star-alliance}; it consists of
13~airlines and 9~regions, and $\Can(\mathbb K_{\mathsf{SA}})$
consists of 13~implications.

\def\Caribbean{\ensuremath{\mathsf{Caribbean}}}
\def\LatinAmerica{\ensuremath{\mathsf{Latin\ America}}}
\def\UnitedStates{\ensuremath{\mathsf{United\ States}}}
\def\Africa{\ensuremath{\mathsf{Africa}}}
\def\AsiaPacific{\ensuremath{\mathsf{Asia\ Pacific}}}
\def\Europe{\ensuremath{\mathsf{Europe}}}
\def\MiddleEast{\ensuremath{\mathsf{Middle\ East}}}
\def\Canada{\ensuremath{\mathsf{Canada}}}
\def\Mexico{\ensuremath{\mathsf{Mexico}}}

\begin{figure}[tp]
  \centering
  \def\rbox#1{\rotatebox{90}{\text{\strut #1}}}
  \def\x{\times}
  \begin{equation*}
    \begin{array}{l|*{9}{c}}
      ~ & \rbox{\LatinAmerica} & \rbox{\Europe} & \rbox{\Canada}
      & \rbox{\AsiaPacific} & \rbox{\MiddleEast} & \rbox{\Africa}
      & \rbox{\Mexico} & \rbox{\Caribbean} & \rbox{\UnitedStates} \\
      \midrule
      \text{Air Canada} & \x & \x & \x & \x & \x & & \x & \x & \x \\
      \text{Air New Zealand} & & \x & & \x & & & & & \x \\
      \text{All Nippon Airways} & & \x & & \x & & & & & \x\\
      \text{Ansett Australia} & & & & \x & & & & & \\
      \text{The Austrian Airlines Group} & & \x & \x & \x & \x & \x &
                      & & \x\\
      \text{British Midlands} & & \x & & & & & & & \\
      \text{Lufthansa} & \x & \x & \x & \x & \x & \x & \x & & \x\\
      \text{Mexicana} & \x & & \x &&& \x & \x & \x \\
      \text{Scandinavian Airlines} & \x&\x&&\x&&\x&&&\x\\
      \text{Singapore Airlines} & &\x&\x&\x&\x&\x&&&\x\\
      \text{Thai Airways International} & \x&\x&&\x&&&&\x&\x\\
      \text{United Airlines} & \x&\x&\x&\x&&&\x&\x&\x\\
      \text{VARIG} & \x&\x&&\x&&\x&\x&&\x\\
    \end{array}
  \end{equation*}
  \caption{Star-Alliance Context $\mathbb{K}_{\mathsf{SA}}$}
  \label{fig:star-alliance}
\end{figure}

In the following, we shall investigate PAC~bases of
$\mathbb K_{\mathsf{SA}}$ and compare them to
$\Can(\mathbb K_{\mathsf{SA}})$.  Note that due to the probabilistic
nature of this undertaking, it is hard to give certain results, as the
outcomes of \lstinline{pac-basis} can be different on different
invocations, as seen in~\cref{sec:stability}.  It is nevertheless
illuminating to see what results are possible for certain values of
the parameters $\epsilon$ and $\delta$.  In particular, we shall see
that implications returned by \lstinline{pac-basis} are still
meaningful, even if they are not valid in $\mathbb K_{\mathsf{SA}}$.

% (set (approx-canonical-base ctx-star-alliance 0.1 0.1))
% #{(#{Caribbean} ==> #{LatinAmerica UnitedStates})
%   (#{Africa LatinAmerica AsiaPacific Mexico Europe UnitedStates Canada} ==> #{MiddleEast Caribbean})
%   (#{Africa Caribbean LatinAmerica AsiaPacific Europe UnitedStates} ==> #{MiddleEast Mexico Canada})
%   (#{MiddleEast} ==> #{AsiaPacific Europe UnitedStates Canada})
%   (#{AsiaPacific UnitedStates} ==> #{Europe})
%   (#{LatinAmerica} ==> #{UnitedStates})
%   (#{AsiaPacific Europe} ==> #{UnitedStates})
%   (#{Canada} ==> #{UnitedStates})
%   (#{LatinAmerica UnitedStates Canada} ==> #{Mexico})
%   (#{Europe UnitedStates} ==> #{AsiaPacific})
%   (#{Caribbean LatinAmerica Mexico UnitedStates} ==> #{Canada})
%   (#{Africa} ==> #{AsiaPacific Europe UnitedStates})
%   (#{Mexico} ==> #{LatinAmerica UnitedStates})}

As a first case, let us consider comparably small values of accuracy
and confidence, namely $\epsilon = 0.1$ and a $\delta = 0.1$.  For
those values we obtained a basis $\mathcal{H}_{0.1,0.1}$ that differs
from $\Can(\mathbb K_{\mathsf{SA}})$ only in the implication
\begin{equation*}
  \Africa, \AsiaPacific, \Europe, \UnitedStates, \Canada \to \MiddleEast
\end{equation*}
being replaced by
\begin{equation}
  \label{eq:2}
  \Africa, \LatinAmerica, \AsiaPacific, \Mexico, \Europe,
  \UnitedStates, \Canada \to \bot
\end{equation}
Indeed, for the second implication to be refuted by the algorithm, the
only counterexample in $\mathbb K_{\mathsf{SA}}$ would have been
Lufthansa, which does not fly to the Caribbean.  However, in our
particular run of \lstinline{pac-basis} that produced
$\mathcal{H}_{0.1,0.1}$, this counterexample had not been considered,
resulting in the implication from Equation~\eqref{eq:2} to remain in
the final basis.  Thus, while $\mathcal{H}_{0.1,0.1}$ does not
coincide with $\Can(\mathbb K_{\mathsf{SA}})$, the only implication in
which they differ~\eqref{eq:2} still has very high \emph{confidence}
in $\mathbb K_{\mathsf{SA}}$, in the sense of the usual notions of
support and confidence of association
rules~\cite{arules:agrawal:association-rules}.\todo{this is not true:
  the implication has confidence zero; the implication is nevertheless
  \enquote{almost correct}, but not in the sense of its confidence}
Therefore, the basis $\mathcal{H}_{0.1,0.1}$ can be considered as a
good approximation of $\Can(\mathbb K_{\mathsf{SA}})$.

% (set (approx-canonical-base ctx-star-alliance 0.1 0.05))
% #{(#{Caribbean} ==> #{LatinAmerica UnitedStates})
%   (#{Africa Caribbean LatinAmerica AsiaPacific Europe UnitedStates} ==> #{MiddleEast Mexico Canada})
%   (#{MiddleEast} ==> #{AsiaPacific Europe UnitedStates Canada})
%   (#{AsiaPacific UnitedStates} ==> #{Europe})
%   (#{LatinAmerica} ==> #{UnitedStates})
%   (#{AsiaPacific Europe} ==> #{UnitedStates})
%   (#{Canada} ==> #{UnitedStates})
%   (#{LatinAmerica UnitedStates Canada} ==> #{Mexico})
%   (#{Europe UnitedStates} ==> #{AsiaPacific})
%   (#{Caribbean LatinAmerica Mexico UnitedStates} ==> #{Canada})
%   (#{Africa} ==> #{AsiaPacific Europe UnitedStates})
%   (#{Mexico} ==> #{LatinAmerica UnitedStates})
%   (#{Africa AsiaPacific Europe UnitedStates Canada} ==> #{MiddleEast LatinAmerica Mexico})}

% (set (approx-canonical-base ctx-star-alliance 0.1 0.8))
% #{(#{Caribbean} ==> #{LatinAmerica UnitedStates})
%   (#{Africa Caribbean LatinAmerica AsiaPacific Europe UnitedStates} ==> #{MiddleEast Mexico Canada})
%   (#{MiddleEast} ==> #{AsiaPacific Europe UnitedStates Canada})
%   (#{Africa AsiaPacific Europe UnitedStates Canada} ==> #{MiddleEast})
%   (#{AsiaPacific UnitedStates} ==> #{Europe})
%   (#{LatinAmerica} ==> #{UnitedStates})
%   (#{AsiaPacific Europe} ==> #{UnitedStates})
%   (#{Canada} ==> #{UnitedStates})
%   (#{LatinAmerica UnitedStates Canada} ==> #{Mexico})
%   (#{Europe UnitedStates} ==> #{AsiaPacific})
%   (#{Caribbean LatinAmerica Mexico UnitedStates} ==> #{Canada})
%   (#{Africa} ==> #{AsiaPacific Europe UnitedStates})
%   (#{Mexico} ==> #{LatinAmerica UnitedStates})}

As in the previous section, it turns out that increasing the parameter
$\delta$ to values larger than $0.1$ does not change much of resulting
basis.  This is to be expected, since $\delta$ is a bound on the
probability that the basis returned by \lstinline{pac-basis} is not of
accuracy $\epsilon$.  Indeed, even for as large a value as $\delta =
0.8$, the resulting basis we obtained in our run of
\lstinline{pac-basis} was exactly $\Can(\mathbb K_{\mathsf{SA}})$.
Nevertheless, care must be exercised when increasing $\delta$, as this
increases the chance that \lstinline{pac-basis} returns a basis that is
far off from the actual canonical basis -- if not in this run, then
maybe in a latter one.

Conversely to this, and in accordance to the results of the previous
section, increasing $\epsilon$, and thus decreasing the bound on the
accuracy, does indeed have a notable impact on the resulting basis.  For
example, for $\epsilon = 0.5$ and $\delta = 0.1$, our run of
\lstinline{pac-basis} returned the basis
\begin{gather*}
  ({\Caribbean} \to \bot),\ ({\AsiaPacific, \Mexico} \to \bot),\
  ({\AsiaPacific, \Europe} \to \bot),\\
  ({\MiddleEast} \to \bot),\ ({\LatinAmerica} \to {\Mexico, \UnitedStates, \Canada}).
\end{gather*}
While this basis enjoys a small Horn-distance to $\Can(\mathbb
K_{\mathsf{SA}})$ of around 0.11, it can hardly be considered usable,
as it ignores a great deal of objects in $\mathbb K_{\mathsf{SA}}$.
Changing the confidence parameter $\delta$ to smaller or larger values
again did not change much of the appearance of the bases.

To summarize, for our example context $\mathbb K_{\mathsf{SA}}$, we
have seen that low values of $\epsilon$ often yield bases that are
very close to the canonical basis of $\mathbb K_{\mathsf{SA}}$, both
intuitively and in terms of Horn-distance to the canonical basis of
$\mathbb K_{\mathsf{SA}}$.  However, the larger the values of
$\epsilon$ get, the less useful bases returned by
\lstinline{pac-basis} appear to be.  On the other hand, varying the
value for the confidence parameter $\delta$ within certain reasonable
bounds does not seem to influence the results of \lstinline{pac-basis}
very much.

\section{Summary and Outlook}
\label{sec:summary-outlook}

The goal of this work is to give first evidence that probably
approximately correct implication bases are a practical substitute for
their exact counterparts, possessing advantageous algorithmic
properties.  To this end, we have argued both quantitatively and
qualitatively that PAC~bases are indeed close approximations of the
canonical basis of both artificially generated as well as real-world
data sets.  Moreover, the fact that PAC~bases can be computed in
output-polynomial time alleviates the usual long running times of
algorithms computing implication bases, and renders the applicability
on larger data sets possible.

To push forward the usability of PAC~bases, more studies are
necessary.  Further investigating the quality of those bases on
real-world data sets is only one concern.  An aspect not considered in
this work is the \emph{actual} running time necessary to compute
PAC~bases, compared to the one for the canonical basis, say.  To make
such a comparison meaningful, a careful implementation of the
\lstinline{pac-basis} algorithm needs to be devised, taking into
account aspects of algorithmic design that are beyond the scope of
this work.

We also have not considered relationships between PAC~bases and
existing ideas for extracting implicational knowledge from data.  For
example, in our investigation of Section~\ref{sec:how-much-different},
it turned out that implications extracted by the algorithm enjoy a
high confidence in the data set.  One could conjecture that there is a
deeper connection between PAC~bases and the notions of support and
confidence of implications.  It is also not too far fetched to imagine
a notion of PAC~bases that incorporates support and confidence right
from the beginning.

The classical algorithm to compute the canonical basis of a formal
context can easily be extended to the algorithm of \emph{attribute
  exploration}.  This algorithm, akin to query learning, aims at
finding an exact representation of an implication theory that is only
accessible through a \emph{domain expert}.  As the algorithm for
computing the canonical basis can be extended to attribute exploration,
we are certain that it is also possible to extend the
\lstinline{pac-basis} algorithm to a form of \emph{probably
  approximately correct attribute exploration}.  Such an algorithm,
while not being entirely exact, would be highly sufficient for the
inherently erroneous process of learning knowledge from human experts,
while possibly being much faster.  On top of that, existing work in
query learning handling non-omniscient, erroneous, or even malicious
oracles could be extended to attribute exploration so that it could
deal with erroneous or malicious domain experts.  In this way,
attribute exploration could be made much more robust for learning
tasks in the world wide web.

\medskip

\noindent\textit{Acknowledgments}: Daniel Borchmann gratefully
acknowledges support by the Cluster of Excellence “Center for
Advancing Electronics Dresden” (cfAED). The computations presented in
this paper were conducted by conexp-clj, a general purpose software
for formal concept analysis
(\url{https://github.com/exot/conexp-clj}).

\appendix

\printbibliography

\end{document}